\documentclass{article}

\PassOptionsToPackage{numbers, compress}{natbib}

\usepackage[preprint]{neurips_2026}

\usepackage[utf8]{inputenc} 
\usepackage[T1]{fontenc}    
\usepackage{hyperref}       
\usepackage{url}            
\usepackage{booktabs}       
\usepackage{amsfonts}       
\usepackage{nicefrac}       
\usepackage{microtype}      
\usepackage{xcolor}         
\usepackage{graphicx}
\usepackage{wrapfig}
\usepackage{amsmath}
\usepackage{amsthm}
\usepackage{multirow}
\usepackage{makecell}

\theoremstyle{plain}
\newtheorem{theorem}{Theorem}[section]
\newtheorem{proposition}[theorem]{Proposition}

\theoremstyle{definition}

\theoremstyle{remark}

\title{Compact SO(3) Equivariant Atomistic Foundation Models via Structural Pruning}

\author{%
  Chen Wang$^{1,2}$ \quad Siyu Hu$^{1,2,}$\thanks{Corresponding authors} \quad Guangming Tan$^{1,2,}$\footnotemark[1] \quad Weile Jia$^{1,2,}$\footnotemark[1] \\
  $^1$State Key Lab of Processors, Institute of Computing Technology, Chinese Academy of Sciences \\
  $^2$University of Chinese Academy of Sciences \\
  \texttt{\{wangchen24z, husiyu, tgm, jiaweile\}@ict.ac.cn} \\
}

\begin{document}

\maketitle

\begin{abstract}
    SO(3) equivariant graph neural networks have become the dominant paradigm for atomistic foundation models, achieving high accuracy and data efficiency by building rotational symmetry directly into the architecture. Yet the computational cost of their higher-order tensor operations creates a tough trade-off between model accuracy and inference efficiency. In this paper, we propose a structural pruning method for SO(3) equivariant atomistic foundation models to bridge this accuracy-efficiency gap. The pruning is applied along the channel and order dimensions, with each irreducible representation kept or removed as a complete block, thereby retaining SO(3) equivariance. Starting from a large checkpoint, the pruned model substantially reduces the inference cost while retaining higher accuracy than an independently trained small model. The pruned MACE-MP model outperforms the official from-scratch trained small model on 7 of 9 metrics on the Matbench Discovery leaderboard. In terms of efficiency, compressed MACE-MP and MACE-OFF models contain 1.5$\times$ to 4$\times$ fewer parameters and require 2.5$\times$ to 4$\times$ less pre-training compute than training a small model from scratch. For downstream applications, fine-tuning the pruned model reduces energy and force errors by 70.1\% and 34.4\% compared to training task-specific models from scratch across eight representative downstream datasets. We demonstrate that the method generalizes to other SO(3) equivariant architectures (SevenNet, eSCN) and can be combined with quantization and knowledge distillation for further gains.
\end{abstract}

\section{Introduction}

\begin{wrapfigure}[14]{r}{0.5\textwidth}
    \vspace{-5mm}
    \centering
    \includegraphics[width=0.46\textwidth]{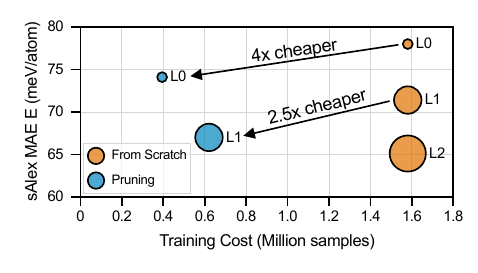}
    \caption{\textbf{Pruning results}. Pruned models outperform scratch-trained models of the same size at 2.5$\times$--4$\times$ lower training cost. Point size indicates parameter count.}
    \label{Fig:Pruning results}
\end{wrapfigure}

Atomistic foundation models have recently emerged as a transformative paradigm in computational chemistry and materials science~\cite{kalita2025machine}. These models~\cite{batatia2023foundation, park_scalable_2024, wood2025family, rhodes2025orb, mazitov2025pet} are available at different scales, placing each at a different point on the accuracy-efficiency frontier. Larger foundation models often achieve stronger benchmark accuracy than smaller ones, but their training cost, inference latency~\cite{leeflashtp}, and memory footprint become major bottlenecks for practical deployment. This persistent tension between the accuracy of large checkpoints and the efficiency of smaller ones remains a central challenge for atomistic foundation models.

Among atomistic foundation models, SO(3) equivariant architectures have shown particularly strong accuracy on widely used benchmarks by explicitly encoding rotational symmetry~\cite{thomas2018tensor, geiger2022e3nn, passaro2023reducing}.  This accuracy gain, however, comes with substantial computational overhead: equivariance is commonly implemented through tensor product operations over higher-order geometric features, whose cost scales up to $\mathcal{O}(L_{\max}^6)$, where $L_{\max}$ is the maximum order. This makes the accuracy-efficiency trade-off especially tough for SO(3) equivariant atomistic foundation models.

Model compression is therefore an important route to enhancing model efficiency, with three common paradigms: structural pruning, knowledge distillation (KD)~\cite{gou2021knowledge}, and quantization~\cite{gholami2022survey}. As summarized in Table~\ref{Tab:Compression comparison}, these methods target different aspects of the efficiency problem. Structural pruning removes redundant architectural units from a pre-trained checkpoint, producing a smaller model in the same family with reduced parameters and FLOPs. KD trains a smaller student model to mimic a larger teacher, which can reduce inference cost but typically depends on the chosen student architecture and does not directly reuse the teacher weights. Quantization keeps the architecture unchanged and reduces numerical precision, improving memory and hardware efficiency without reducing the number of parameters or tensor-product paths.

\begin{table}[t]
    \caption{\textbf{Comparison of model compression paradigms for atomistic foundation models.} Our structural pruning is distinct from and can be combined with Knowledge Distillation and Quantization.}
    \label{Tab:Compression comparison}
    \centering
    \small
    \resizebox{\textwidth}{!}{
    \begin{tabular}{lccc}
        \toprule
        \textbf{Criterion} & \textbf{Structural Pruning (Ours)} & \textbf{Knowledge Distillation} & \textbf{Quantization} \\
        \midrule
        Weight source           & Sliced from pre-trained & Random init (student) & Pre-trained (bit-reduced) \\
        Architecture scope      & Same family             & Often cross-arch      & Same arch \\
        Reduces FLOPs \& params & \checkmark              & \checkmark (smaller student) & $\times$ \\
        Reduces bit-precision   & $\times$                & $\times$              & \checkmark \\
        SO(3) equivariance      & Exactly preserved       & Depends on student    & Approx.\ preserved \\
        \bottomrule
    \end{tabular}
    }
\end{table}

Among these compression strategies, we focus on structural pruning because it can directly inherit weights from a large checkpoint and produce a smaller pre-trained model, offering a natural way to bridge this accuracy-efficiency gap. However, pruning strategies developed for large language models~\cite{gupta2022compression, xia2023sheared, ashkboos2024slicegpt, men2025shortgpt} cannot be directly transferred to SO(3) equivariant models: unlike scalar-featured networks, these models operate on higher-order geometric tensors, so naively removing feature dimensions breaks equivariance. To the best of our knowledge, existing pruning methods for atomistic foundation models remain coarse-grained, typically removing interaction blocks whose marginal accuracy contribution is limited~\cite{ghunaim2025towards, kong2025scalable}. While such block-level pruning preserves SO(3) equivariance by removing entire equivariant modules, it effectively reduces the receptive field of the model and can be especially harmful for shallow model architectures~\cite{batatia2022mace}.
This raises a central challenge: \textbf{how can we identify removable structures and compress the weights of a large pre-trained SO(3) equivariant model while preserving equivariance and retaining accuracy?}

To address this challenge, we propose a structural pruning framework for SO(3) equivariant atomistic foundation models. Our key contributions are as follows:
\begin{itemize}
    \item We introduce a structural pruning method for higher-order SO(3) equivariant tensors across both channel and order dimensions while preserving SO(3) equivariance.
    \item We propose a four-stage framework to compress foundation models for downstream fine-tuning: (1) calibration, (2) pruning, (3) retraining, and (4) fine-tuning.
    \item We validate our approach on MACE-MP and MACE-OFF, achieving 1.5$\times$ to 4$\times$ parameter reduction and 2.5$\times$ to 4$\times$ training cost reduction (Figure~\ref{Fig:Pruning results}). The pruned models outperform official models trained from scratch on accuracy and achieve 2.7$\times$ inference speedup. When fine-tuned on downstream tasks, they reduce energy and force errors by 70.1\% and 34.4\% compared to training from scratch.
    \item We demonstrate the generalizability of our method across architectures (SevenNet, eSCN) and show that structural pruning can be combined with quantization and knowledge distillation for further efficiency and accuracy gains.
\end{itemize}

\section{Related Work}

\paragraph{Atomistic Foundation Models.} Atomistic foundation models are MLIPs trained on massive chemical datasets to predict interatomic potentials across the periodic table. Unlike earlier MLIPs~\cite{wang2018deepmd, schutt2021equivariant, gasteiger2021gemnet, batzner20223} tailored for specific chemical systems, these models~\cite{batatia2023foundation, park_scalable_2024, wood2025family, rhodes2025orb, mazitov2025pet, tan2025high, lysogorskiy2025graph, zhang2506graph, yang2024mattersim, qu2026recipe, zhou2026matris} are pre-trained on vast, diverse quantum mechanical datasets~\cite{deng2023chgnet, eastman2023spice, barroso2024open, levine2025open, gharakhanyan2026open, sriram2025open} to learn transferable representations of interatomic interactions. By effectively bridging the accuracy of ab initio methods with the efficiency of classical force fields~\cite{schutt2020machine}, these foundation models enable accurate simulations at unprecedented scales~\cite{merchant2023scaling}. In practical applications such as drug discovery and materials design, foundation models can be fine-tuned on domain-specific datasets~\cite{radova2025fine, kaur2025data}. However, their massive parameter counts and computational costs create a deployment bottleneck, motivating our work to develop efficient compression strategies that make these foundation models practical for downstream applications.

\paragraph{Model Compression.}
Model compression~\cite{deng2020model} encompasses a suite of techniques, including pruning~\cite{frankle2018lottery}, knowledge distillation (KD)~\cite{gou2021knowledge, ekstrom2023accelerating, amin2025towards} and quantization~\cite{gholami2022survey, benoit2025speeding}, designed to reduce the memory footprint and inference latency of deep networks. While distillation requires training student networks from scratch and quantization reduces bit-precision, pruning directly obtains the compressed model weights from the pre-trained model. In this paper, we focus specifically on pruning, as it offers a direct pathway to reduce computational overhead while maintaining the high numerical precision required for MLIPs.

\paragraph{Structural Pruning.}
Structural pruning involves removing entire structural units rather than individual weights, ensuring acceleration without specialized sparse kernels. This approach has been widely studied in LLMs~\cite{michel2019sixteen, wang2020structured, ma2023llm, xia2023sheared}. Notably, existing work in MLIPs has largely focused on layer-level pruning strategies~\cite{ghunaim2025towards, kong2025scalable}, which reduce computational cost by decreasing model depth. In contrast, our work extends structural pruning to the domain of SO(3) equivariant GNNs, addressing the unique challenge of pruning coupled irreducible representations connected by tensor products, which standard channel-pruning methods fail to handle.

\section{Method}\label{Sec:Method}

MLIPs learn a parameterized energy $E = \Phi_\theta(\mathbf{R}, \mathbf{Z})$ over atomic positions $\mathbf{R}$ and species $\mathbf{Z}$, with forces derived as $\mathbf{F}_i = -\nabla_{\mathbf{r}_i}\Phi_\theta$. State-of-the-art MLIPs encode physical symmetries via SO(3) equivariant message passing~\cite{thomas2018tensor, geiger2022e3nn, batzner20223}, where node features are higher-order tensors $h_{j,klm}^{(t)}$ indexed by channel $k$ and irreducible representation $(l, m)$, coupled via Clebsch-Gordan products with $\mathcal{O}(L_{\max}^6)$ complexity. Structural pruning~\cite{han2015deep, ma2023llm} removes coherent parameter groups to yield dense, hardware-friendly compressed models. However, directly applying it to SO(3) equivariant architectures is non-trivial, as naive removal of feature dimensions breaks the symmetry of the tensor product path (see Appendix~\ref{AppSec:Background} and \ref{AppSec:Motivation and Theory} for details).

Designing a structural pruning framework for SO(3) equivariant atomistic foundation models requires addressing two key issues: (1) identifying a pruning granularity that strictly preserves the geometric symmetry of the feature space, and (2) establishing a physics-informed importance criterion that captures the contribution of features to both potential energy and forces.

\begin{figure}[ht]
    \centering
    \includegraphics[width=\textwidth]{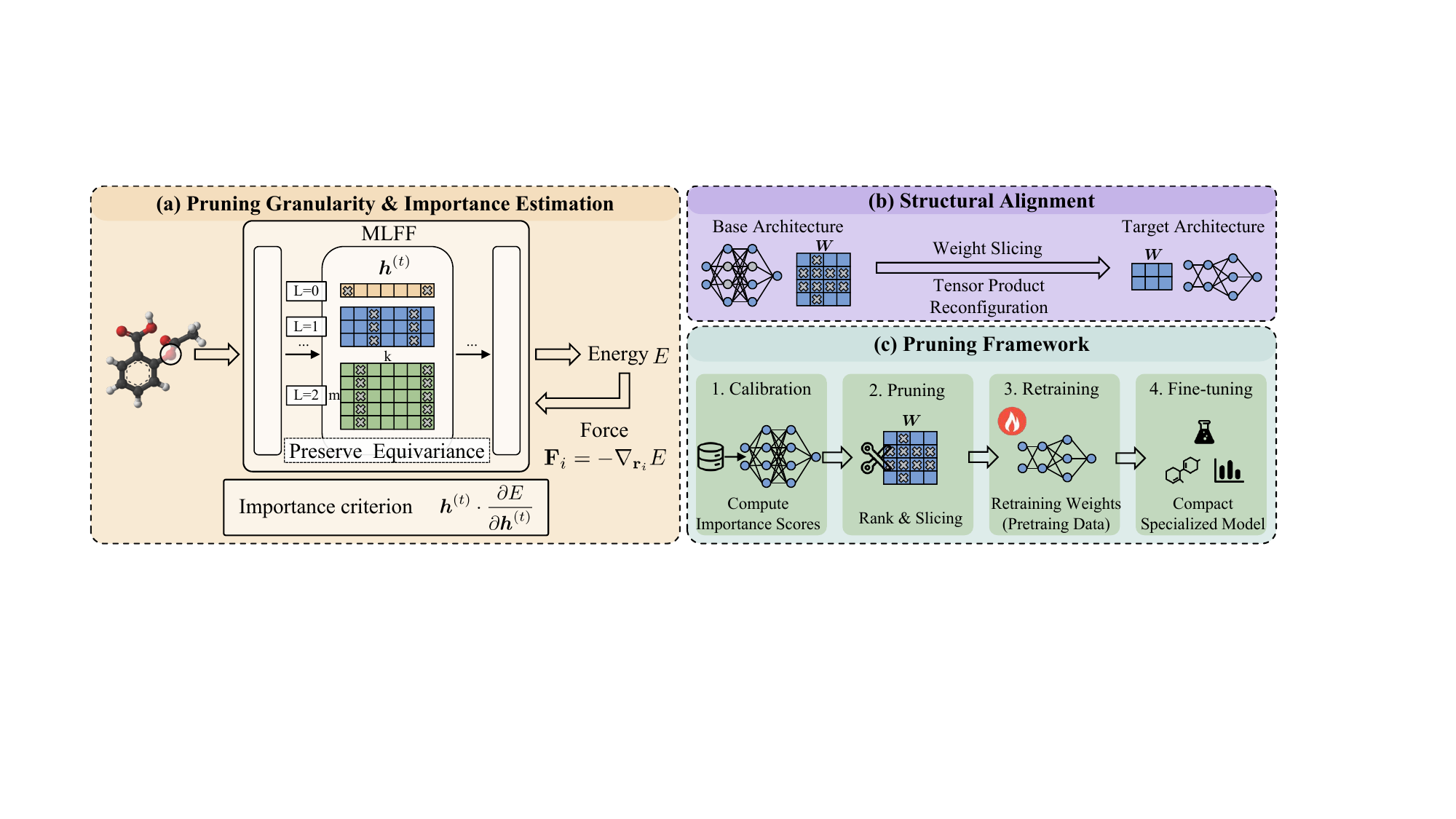}
    \caption{\textbf{Overview of the proposed structural pruning framework.} (a) \textbf{Pruning Granularity \& Importance Estimation:} To preserve SO(3) equivariance, features are pruned as $(k, l)$ blocks based on an energy-force sensitive importance criterion. (b) \textbf{Structural Alignment:} The model structure is aligned to the target structure via weight slicing and tensor product reconfiguration. (c) \textbf{Pruning Framework:} The four-stage pipeline (Calibration, Pruning, Retraining, Fine-tuning) effectively compresses the foundation model to a compact specialized model.}
\end{figure}

\subsection{Equivariant Pruning Granularity}

Standard structural pruning treats channels as independent units. However, in SO(3) equivariant models, the feature space is decomposed into a direct sum of irreducible representations of order $l \in [0, L_{\text{max}}]$. A feature vector at layer $t$, atom $j$, channel $k$, and order $l$ consists of components $h_{j, k l m}^{(t)}$ where $m \in [-l, l]$.

The rotation operation in SO(3) inherently couples all components within the $m$ dimension. Treating these components independently would violate the transformation rules of the irreducible representations, thereby breaking the model's equivariance (see Proposition \ref{prop:equivariance_preservation} in Appendix for a formal proof). Therefore, any structural pruning must treat the subspace spanned by the $2l+1$ components of a specific order $l$ as an atomic unit to avoid breaking SO(3) equivariance. We define our pruning granularity based on the tuple $(k, l)$, representing the $k$-th channel of order $l$.

We introduce a binary mask $\mathbf{z}^{(t)} \in \{0, 1\}^{K \times (L_{\text{max}}+1)}$, where $z_{k, l}^{(t)}$ indicates whether the feature block corresponding to channel $k$ and order $l$ is retained. The pruned feature is formally defined as:
\begin{equation}
    \hat{h}_{j, k l m}^{(t)} = z_{k, l}^{(t)} \cdot h_{j, k l m}^{(t)}.
\end{equation}
This formulation allows for heterogeneous pruning across orders. The model can retain scalar information ($l=0$) for a specific channel while discarding its higher-order geometric tensors ($l>0$), or vice versa. This flexibility is essential for compressing MLIPs, as higher-order interactions are computationally expensive but often sparse in their contribution to the prediction. As shown in Proposition \ref{prop:frequency_l} (Appendix \ref{AppSec:Spectral Interpretation}), higher order $l$ corresponds to higher angular frequencies. While these higher-order interactions capture fine geometric details, they are computationally expensive and often sparse in their contribution to the potential energy of stable systems.

\subsection{Importance Estimation}

A robust pruning criterion for MLIPs must account for their dual objective: predicting the potential energy $E$ and the forces $\mathbf{F}_i = -\nabla_{\mathbf{r}_i} E$. According to this, we propose an SO(3) invariant importance criterion based on the sensitivity of the potential energy surface. We approximate the impact of removing a feature block $(k, l)$ on the total energy $E$ using a first-order Taylor expansion. If a feature is pruned (i.e., set to zero), the resulting change in energy $\Delta E$ can be approximated as:
\begin{equation}
    \Delta E \approx \sum_{j, m} \frac{\partial E}{\partial h_{j, k l m}^{(t)}} \cdot (0 - h_{j, k l m}^{(t)}) = - \sum_{j, m} \frac{\partial E}{\partial h_{j, k l m}^{(t)}} \cdot h_{j, k l m}^{(t)}.
\end{equation}

Here, the term $\frac{\partial E}{\partial h_{j, k l m}^{(t)}}$ represents the gradient of the energy with respect to the feature. This product effectively combines the feature's magnitude with the model's sensitivity to that feature.

This gradient-based approach is uniquely advantageous for MLIPs due to the intrinsic relationship between energy and forces. Since calculating forces requires backpropagating the gradient of $E$ to the atomic positions $\mathbf{R}$, the intermediate gradients $\frac{\partial E}{\partial h}$ are computed during the standard force inference. Consequently, our criterion incurs negligible computational overhead. Furthermore, because forces are derived from the energy, features that significantly influence $E$ (high gradient magnitude) are potentially critical for accurate force prediction.

We aggregate this importance score over the representation dimension $m$ and all atoms $j$, averaged over a calibration dataset $\mathcal{D}$. The final importance score $I_{k, l}^{(t)}$ is defined as:
\begin{equation}
    I_{k, l}^{(t)} = \mathbb{E}_{\mathbf{R}, \mathbf{Z} \sim \mathcal{D}} \left| \sum_{j, m} \frac{\partial E}{\partial h_{j, k l m}^{(t)}} \cdot h_{j, k l m}^{(t)} \right|.
\end{equation}
This criterion is SO(3) invariant (see Appendix~\ref{AppSec:Rotational Invariance} for proof), ensuring that pruning decisions are consistent under rotations of the atomic configuration.

By retaining features with the highest $I_{k, l}^{(t)}$, we ensure the preservation of the structural pathways important for constructing both the potential energy surface and the associated force field. The effectiveness of this criterion over activation-only and magnitude-only alternatives is validated in Appendix~\ref{AppSec:Ablation Importance}.

\subsection{Structural Alignment} \label{Sec:Structural Alignment}

Given a pre-defined target architecture (i.e., the target number of channels for each order $l$ at each layer), we perform pruning by selecting the top-ranked feature blocks within each layer. However, simply masking features is insufficient for acceleration. The model structure must be physically altered to match the reduced feature dimensions. We enforce structural consistency through the following alignment steps:

\begin{enumerate}
    \item \textbf{Tensor Product Reconfiguration:} The tensor product aggregates features via coupled paths. When the feature dimension of an input order $l_2$ is reduced, the corresponding radial weights $R_{k l_1 l_2 l_3}^{(t)}$ and Clebsch-Gordan paths must be indexed and sliced to align with the retained active channels. This ensures that the computational graph strictly corresponds to the dense, compacted tensors of the target architecture.
    \item \textbf{Weight Slicing:} For the linear transformation within the equivariant blocks, the weights are parameterized as $W_{k \tilde{k} l}^{(t)}$, mapping the $\tilde{k}$-th input channel of order $l$ to the $k$-th output channel. To align with the pruned architecture, we physically slice these weights. Specifically, we retain only the rows corresponding to the active output channels and the columns corresponding to the active input channels. The remaining entries form a smaller, dense weight tensor, effectively reducing the computational cost of the linear projection.
\end{enumerate}

This process ensures that the pruned model is not a sparse version of the original but a dense model with a smaller architecture, ready for efficient inference on standard hardware.

\subsection{Pruning Framework}

Our framework follows a four-stage pipeline:
\begin{enumerate}
    \item \textbf{Calibration:} We pass a small subset of pre-training data through the pre-trained foundation model. We compute both the forward pass (Energy) and the backward pass (Forces) to accumulate the importance scores $I_{k, l}^{(t)}$.
    \item \textbf{Target-Driven Pruning:} We define a target compact architecture, specifying the desired number of channels for each order $l$ at each layer. For every layer, we rank the feature blocks by their importance scores $I_{k, l}^{(t)}$ and retain the top blocks to match the target specification. The model parameters are then physically sliced (as described in Sec. \ref{Sec:Structural Alignment}) to instantiate the compact model.
    \item \textbf{Lightweight Retraining:} To recover from the approximation error introduced by pruning, we perform a one-time retraining phase on a subset of the pre-training data. This allows the remaining weights to adapt to the reduced architecture before task-specific fine-tuning.
    \item \textbf{Fine-tuning:} Finally, we fine-tune the compact foundation model on a specific dataset to obtain a compact specialized model.
\end{enumerate}

\section{Experiments}

We evaluate the proposed SO(3) equivariant structural pruning framework primarily on MACE~\cite{batatia2022mace}, and additionally demonstrate its architectural generalizability on SevenNet~\cite{park_scalable_2024} and eSCN~\cite{passaro2023reducing}. Our experiments address four key questions: (1) Do pruned models achieve higher accuracy than same-size models trained from scratch? (2) Do they deliver practical efficiency gains in training cost and inference speed? (3) Do they serve as effective initializations for downstream fine-tuning? (4) Can structural pruning be combined with other compression methods, such as quantization and knowledge distillation?

\subsection{Experimental Setup}

\paragraph{Foundation Models.} Our experiments utilize the MACE~\cite{batatia2022mace} architecture. We examine two foundation model families: MACE-MP-0~\cite{batatia2023foundation} for inorganic materials (pre-trained on MPtrj~\cite{deng2023chgnet}, $\sim$1.5M structures, 89 elements) and MACE-OFF~\cite{kovacs2025mace} for organic chemistry (pre-trained on SPICE~\cite{eastman2023spice}, $\sim$1.0M conformations). Both families offer official checkpoints of varying sizes, providing baselines to benchmark our pruning method.

\paragraph{Downstream Tasks.} For inorganic systems, we evaluate on eight diverse datasets (SSE, H2O, AgAu, AlMgCu, Cu, Ti, V, W) covering materials under varied temperature and pressure conditions. For organic systems, we evaluate on 3BPA~\cite{kovacs2021linear}, AcAc~\cite{batatia2022design}, and rMD17~\cite{christensen2020role}. Dataset details are provided in Appendix~\ref{AppSec:Datasets}.

\paragraph{Implementation Details.} Further details regarding the calibration dataset construction, specific architectural handling during pruning, retraining hyperparameters, and ablation studies are provided in Appendix~\ref{AppSec:Experimental Details}. During the retraining phase, we maintain the same number of training epochs as the baseline models trained from scratch. Consequently, a reduction in training data size translates linearly to a reduction in total computational cost (GPU hours).

\subsection{Accuracy}

We evaluate whether pruned models produce higher-quality foundation models than those trained from scratch at the same parameter budget. We prune larger MACE-MP and MACE-OFF models into smaller target architectures using the official MACE settings, enabling a direct comparison with from-scratch baselines of identical size under the same evaluation protocol.

\paragraph{MACE-MP Results:}
Table~\ref{Tab:Data efficiency of pruning methods on MACE-MP} compares pruned models against from-scratch baselines of the same target architecture. Our method compresses model parameters up to 4$\times$. Pruned models achieve lower Force MAE and substantially better out-of-distribution performance on the sAlex~\cite{schmidt2024improving, barroso2024open, fulearning} dataset compared to from-scratch baselines of the same size, indicating that the pruned model effectively inherits the generalized feature representations of the larger parent model.

\paragraph{MACE-OFF Results:} Table~\ref{Tab:Accuracy of pruning methods on MACE-OFF} presents similar findings for organic molecules. Across diverse test subsets, the pruned model retrained with 100\% of the data outperforms the from-scratch baseline of the same size on all metrics, and already matches it with only 50\% of the data, demonstrating that the pruning process yields a higher-quality compact model while preserving broad coverage of chemical space.

\begin{table}[ht]
    \caption{\textbf{Accuracy of pruning methods on MACE-MP.} The table compares the performance of pruned models against baselines trained from scratch. Metrics include MAE for Energy (E, meV/atom), Force (F, meV/\AA), and Stress (S, meV/$\text{\AA}^3$) on MPtrj, and Energy (E, meV/atom) on the sAlex subset. Data \% indicates the fraction of training data used. Values in parentheses show the relative change in MAE compared to from-scratch trained models of the same size. Bold values indicate that the pruned model outperforms the from-scratch trained baseline.}
    \label{Tab:Data efficiency of pruning methods on MACE-MP}
    \centering
    \small
    \resizebox{\textwidth}{!}{
    \begin{tabular}{l|cccc|lll|l}
        \toprule
        \textbf{Method}                       & \textbf{Base Model}             & \textbf{Pruned Model}                 & \textbf{Params}             & \textbf{Data \%}     & \textbf{MAE E}          & \textbf{MAE F}          & \textbf{MAE S}         & \textbf{sAlex MAE E}    \\
        \midrule
        \multirow{4}{*}{From Scratch}         & Layer2 L1 C256                  & \multirow{4}{*}{/}                    & 15,847,440                  & \multirow{4}{*}{100} & /                       & /                       & /                      & 81.7                    \\
                                              & Layer2 L2 C128                  &                                       & 5,725,072                   &                      & 22.6                    & 42.2                    & 1.6                    & 65.1                    \\
                                              & Layer2 L1 C128                  &                                       & 4,688,656                   &                      & 25.8                    & 46.7                    & 1.7                    & 71.4                    \\
                                              & Layer2 L0 C128                  &                                       & 3,847,696                   &                      & 29.6                    & 52.3                    & 1.8                    & 78.0                    \\
        \midrule
        \multirow{2}{*}{L Pruning}            & \multirow{2}{*}{Layer2 L2 C128} & \multirow{2}{*}{Layer2 L0 C128}       & \multirow{2}{*}{3,847,696}  & \textbf{25}          & 30.9 (+4.4\%) & \textbf{50.8 (-2.9\%)}  & 1.9 (+5.5\%)           & \textbf{74.1 (-5.0\%)}  \\
                                              &                                 &                                       &                             & 100                  & \textbf{25.5 (-13.9\%)} & \textbf{47.3 (-9.6\%)}  & \textbf{1.7 (-5.6\%)}  & \textbf{71.9 (-7.8\%)}  \\
        \midrule
        Channel Pruning                       & Layer2 L1 C256                  & Layer2 L1 C128                        & 4,688,656                   & \textbf{40}          & \textbf{16.4 (-36.4\%)} & \textbf{45.8 (-1.9\%)}  & 1.8 (+5.9\%)           & \textbf{67.0 (-6.2\%)}  \\
        \midrule
        L \& Channel Pruning                  & Layer2 L1 C256                  & Layer2 L0 C128                        & 3,847,696                   & \textbf{40}          & \textbf{18.9 (-36.1\%)} & \textbf{51.7 (-1.1\%)}  & 1.9 (+5.5\%)           & \textbf{67.2 (-13.8\%)} \\
        \bottomrule
    \end{tabular}
    }
    \begin{flushleft}{\footnotesize \textit{Notation:} Layer$x$ L$y$ C$z$: $x$ message-passing layers, max order $y$, $z$ channels.}\end{flushleft}
\end{table}

\begin{table}[ht]
    \caption{\textbf{Accuracy of pruning methods on MACE-OFF.} The table compares the performance of pruned models against baselines trained from scratch. Metrics include MAE for Energy (E, meV/atom) and Forces (F, meV/\AA) across diverse molecular subsets. Data \% indicates the fraction of training data used. Bold values indicate that the pruned model outperforms the from-scratch trained baseline of the same size.}
    \label{Tab:Accuracy of pruning methods on MACE-OFF}
    \centering
    \small
    \resizebox{\textwidth}{!}{
    \begin{tabular}{l|cc|cc|cc|cc|cc|cc|cc|cc}
    \toprule
    \multirow{2}{*}{\textbf{Method}} & \textbf{Params} & \textbf{Data \%} & \multicolumn{2}{c|}{\makecell{\textbf{DES370K} \\ \textbf{Dimers}}} & \multicolumn{2}{c|}{\makecell{\textbf{DES370K} \\ \textbf{Monomers}}} & \multicolumn{2}{c|}{\textbf{Dipeptides}} & \multicolumn{2}{c|}{\textbf{PubChem}} & \multicolumn{2}{c|}{\textbf{QMugs}} & \multicolumn{2}{c|}{\makecell{\textbf{Solvated} \\ \textbf{Amino Acids}}} & \multicolumn{2}{c}{\textbf{Water}} \\
    & & & E & F & E & F & E & F & E & F & E & F & E & F & E & F \\
    \midrule
    \multirow{2}{*}{From Scratch} & 4,707,312 & 100         & 0.5          & 6.6          & 0.6          & 6.6           & 0.4          & 10.2          & 0.9          & 14.8          & 0.5          & 17.0          & 1.0          & 19.4          & 0.8          & 13.6 \\
                                  & 1,428,368 & 100         & 0.6          & 9.2          & 0.6          & 9.5           & 0.5          & 14.5          & 0.9          & 22.0          & 0.6          & 24.2          & 1.3          & 23.8          & 0.8          & 16.1 \\
    \midrule
    L \& Channel Pruning          & 1,428,368 & \textbf{50} & 0.6          & 9.8          & 0.6          & 10.1          & \textbf{0.5} & 15.3          & 0.9          & 23.2          & \textbf{0.6} & 25.0          & \textbf{1.2} & 24.3          & 0.8          & 16.4 \\
    L \& Channel Pruning          & 1,428,368 & 100         & \textbf{0.5} & \textbf{9.0} & \textbf{0.5} & \textbf{9.3}  & \textbf{0.5} & \textbf{14.3} & \textbf{0.8} & \textbf{21.7} & \textbf{0.6} & \textbf{23.5} & \textbf{1.2} & \textbf{22.6} & \textbf{0.7} & \textbf{15.5} \\
    \bottomrule
    \end{tabular}
    }
\end{table}

\paragraph{Generalization to Large-Scale Benchmarks:} To further validate that the accuracy gains extend beyond the pre-training distribution, we evaluate on Matbench Discovery~\cite{riebesell2023matbench}, a large-scale screening benchmark across 250,000+ candidate materials. Our pruned MACE-MP model (Layer2 L0 C128) surpasses the official MACE-MP-0 checkpoint (Layer2 L1 C128) on 7/9 metrics using only 25\% of MPtrj (Table~\ref{Tab:Matbench Discovery}), demonstrating that the pruned model is not only comparable to but can exceed the accuracy of from-scratch-trained foundation models in thermodynamic stability prediction.

\begin{table}[ht]
    \caption{\textbf{Matbench Discovery results.} Our pruned model is compared against the official MACE-MP-0 baseline. Bold values indicate the better result on each metric. $\uparrow$: higher is better; $\downarrow$: lower is better.}
    \label{Tab:Matbench Discovery}
    \centering
    \small
    \resizebox{\textwidth}{!}{
    \begin{tabular}{l|ccccccccc}
        \toprule
        \textbf{Model} & \textbf{F1 $\uparrow$} & \textbf{DAF $\uparrow$} & \textbf{Prec. $\uparrow$} & \textbf{Recall $\uparrow$} & \textbf{Acc. $\uparrow$} & \textbf{MAE $\downarrow$} & \textbf{R$^2$ $\uparrow$} & \textbf{$\text{K}_\text{srme}$ $\downarrow$} & \textbf{RMSD $\downarrow$} \\
        \midrule
        MACE-MP-0 (Official) & 0.669 & 3.777 & 0.577 & \textbf{0.796} & 0.878 & 0.057 & 0.697 & 0.682 & \textbf{0.0915} \\
        Pruned (Ours)        & \textbf{0.680} & \textbf{3.889} & \textbf{0.595} & 0.794 & \textbf{0.884} & \textbf{0.056} & \textbf{0.717} & \textbf{0.625} & 0.0917 \\
        \bottomrule
    \end{tabular}
    }
\end{table}

\paragraph{Architectural Generalizability:} To verify that our framework applies broadly to SO(3) equivariant architectures beyond MACE, we evaluate on SevenNet~\cite{park_scalable_2024} and eSCN~\cite{passaro2023reducing}. As summarized in Table~\ref{Tab:Architecture generalization summary}, the pruned SevenNet surpasses the scratch baseline Force MAE with only 25\% of MPtrj, and the pruned eSCN trained on 1M OC20~\cite{chanussot2021open} samples surpasses the scratch baseline trained on 2M, achieving a $2\times$ reduction in data requirements. Full details are in Appendix~\ref{AppSec:Applicability of the Proposed Method}.

\begin{table}[ht]
    \caption{\textbf{Generalizability across architectures.} The pruned model matches the from-scratch baseline with a fraction of the training data.}
    \label{Tab:Architecture generalization summary}
    \centering
    \small
    \resizebox{\textwidth}{!}{
    \begin{tabular}{l|l|cc|cc}
        \toprule
        \textbf{Architecture} & \textbf{Dataset} & \textbf{Pruned Data} & \textbf{Pruned Force MAE (meV/\AA)} & \textbf{From Scratch Data} & \textbf{From Scratch Force MAE (meV/\AA)} \\
        \midrule
        SevenNet & MPtrj      & 25\%      & \textbf{39}   & 100\%  & 40   \\
        eSCN     & OC20-2M    & 50\%      & \textbf{18.4} & 100\%  & 19.1 \\
        \bottomrule
    \end{tabular}
    }
\end{table}

\subsection{Efficiency}

\begin{wrapfigure}{r}{0.47\textwidth}
    \centering
    \includegraphics[width=0.47\textwidth]{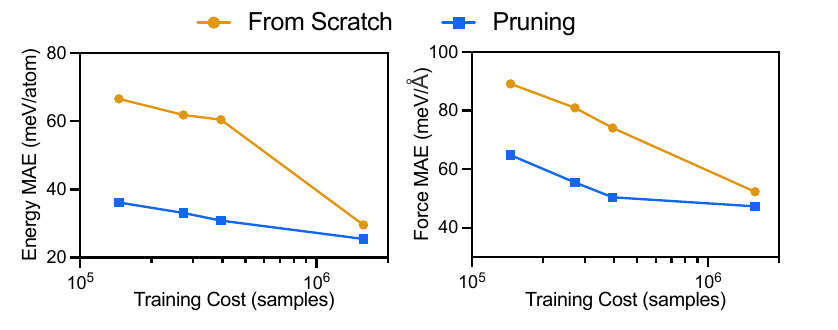}
    \caption{\textbf{Energy and Force MAE vs.\ training data size on MACE-MP} (Layer2 L2 C128 $\to$ Layer2 L0 C128). The pruned model outperforms training from scratch across all data scales.}
    \label{Fig:Comparison of energy and force MAE trends between From Scratch and Pruning cross different training data sizes}
\end{wrapfigure}

Compact architectures are only practically useful if they deliver concrete reductions in computational cost. We evaluate the efficiency of the pruned models along two axes: end-to-end training cost and inference throughput.

\paragraph{Training Cost:} As shown in Figure~\ref{Fig:Comparison of energy and force MAE trends between From Scratch and Pruning cross different training data sizes}, the pruned model consistently outperforms from-scratch training across all data scales, and matches the scratch baseline accuracy with only 25--40\% of the data. This translates directly to lower training cost. The full pipeline (calibration, pruning, and retraining on 25\% of MPtrj) requires approximately 181 GPU-hours on an A800, compared to $\sim$750 GPU-hours for pre-training a model from scratch, confirming a \textbf{4$\times$} reduction in training cost (Appendix~\ref{AppSec:Pipeline Cost}).

\paragraph{Inference Efficiency:} Using the MPtrj dataset as an example, on an NVIDIA A800 GPU, our pruned models achieve a 2.7$\times$ throughput improvement and a 5.7$\times$ reduction in peak memory usage compared to the original foundation models. On an RTX 3090, structural pruning yields a 3.0$\times$ throughput improvement, confirming that the efficiency gains are hardware-agnostic (Appendix~\ref{AppSec:Throughput and Memory Usage}).

\subsection{Application}

We evaluate the utility of the pruned models as initializations for downstream fine-tuning. A key question is whether a pruned-then-retrained model, despite using only a fraction of the pre-training data, can serve as a good starting point.

\begin{table}[ht]
    \caption{\textbf{Effectiveness of pruned MACE-MP model as initialization for downstream fine-tuning.} RMSE for Energy (E, meV/atom) and Force (F, meV/\AA) across 8 inorganic datasets. Bold: best per column.}
    \label{Tab:Effectiveness of pruned MACE-MP model as initialization for downstream fine-tuning}
    \centering
    \small
    \resizebox{\textwidth}{!}{
    \begin{tabular}{ll|cccccccc}
    \toprule
    \multicolumn{2}{c|}{\textbf{Init.}} & \textbf{SSE-PBE} & \textbf{H2O-PD} & \textbf{AgAu} & \textbf{AlMgCu} & \textbf{Cu} & \textbf{Ti} & \textbf{V} & \textbf{W} \\
    \midrule
    \multirow{2}{*}{From Scratch}
    & E & 1.8            & 79.9          & 369.1         & 7.7           & 38.8          & 8.3          & 14.2          & 15.6          \\
    & F & 29.9           & \textbf{29.7} & 34.5          & 42.9          & 13.6          & 94.2         & 140.4         & 181.2         \\
    \midrule
    \multirow{2}{*}{Pre-training (100\%)}
    & E & 0.6            & 1.1           & 15.4          & 3.4           & \textbf{0.7}  & 6.7          & \textbf{5.4}  & \textbf{6.2}  \\
    & F & 29.1           & 36.8          & 15.8          & 12.2          & 7.4           & 91.4         & 83.5          & 104.1         \\
    \midrule
    \multirow{2}{*}{\textbf{Pruning (25\%)}}
    & E & \textbf{0.5}   & \textbf{0.9}  & \textbf{6.6}  & \textbf{3.2}  & \textbf{0.7}  & \textbf{6.5} & 6.1           & 6.8           \\
    & F & \textbf{25.9}  & 31.6          & \textbf{15.1} & \textbf{11.6} & \textbf{6.9}  & \textbf{90.6}& \textbf{82.3} & \textbf{101.1}\\
    \bottomrule
    \end{tabular}
    }
\end{table}

\paragraph{Inorganic Materials:} Table \ref{Tab:Effectiveness of pruned MACE-MP model as initialization for downstream fine-tuning} details the fine-tuning performance on 8 diverse inorganic datasets. We compare three initialization strategies: Random Initialization (From Scratch), an official pre-trained checkpoint (using 100\% MPtrj data), and our pruned checkpoint (using only 25\% MPtrj data). The results show that the Pruning (25\%) initialization significantly outperforms a small model trained from scratch, reducing energy and force errors by 70.1\% and 34.4\% respectively, while remaining highly competitive with the full pre-training baseline that uses 4$\times$ more data. To further validate the deployment readiness of the fine-tuned model, we run MD simulations of liquid water. The fine-tuned pruned model achieves energy drift ${<}0.01$ meV/atom/ps and faithfully reproduces structural properties (Appendix~\ref{AppSec:MD Simulation}).

\paragraph{Organic Molecules:} For molecular tasks on 3BPA, AcAc, and rMD17 (see Appendix~\ref{AppSec:MACE-OFF Downstream}), pruned MACE-OFF models match or exceed from-scratch baselines across temperature regimes and dihedral configurations, confirming that compact specialized models retain the expressivity needed for complex potential energy surfaces.

Overall, these experiments confirm that our structural pruning framework successfully reduces pre-training cost while retaining transferability. By preserving the critical weights of the large model, the compressed models serve as highly effective, lightweight starting points for downstream applications.

\subsection{Combination with Other Compression Methods}

As noted in Table~\ref{Tab:Compression comparison}, structural pruning can be combined with both quantization and knowledge distillation. We verify empirically that these techniques can be combined with our method to achieve compounding gains beyond what each method achieves alone.

\paragraph{Combination with Quantization:} As structural pruning and quantization reduce different aspects of model cost (FLOPs/params vs.\ bit-precision), they can be combined for compounding gains. Table~\ref{Tab:Combined Pruning and Quantization Speedup} reports peak throughput on an RTX 3090 for the base model and pruned models under FP64 and FP32 precision. Structural pruning alone yields a 3.0$\times$ speedup; reducing precision to FP32 alone yields 2.0$\times$; applying both together achieves 9.6$\times$ combined throughput improvement (2,728 to 26,144 atom/s).

\begin{table}[ht]
    \caption{\textbf{Combined speedup from structural pruning and quantization on RTX 3090.} Speedup is relative to the base FP64 baseline.}
    \label{Tab:Combined Pruning and Quantization Speedup}
    \centering
    \small
    \begin{tabular}{l|cc|c}
        \toprule
        \textbf{Model} & \textbf{FP64 (atom/s)} & \textbf{FP32 (atom/s)} & \textbf{Speedup (vs.\ Base FP64)} \\
        \midrule
        Base (Layer2 L2 C128)   & 2,728  & 5,576  & 1.0$\times$ / 2.0$\times$ \\
        Pruned (Layer2 L1 C128) & 4,976  & 12,268 & 1.8$\times$ / 4.5$\times$ \\
        Pruned (Layer2 L0 C128) & 8,271  & 26,144 & 3.0$\times$ / \textbf{9.6}$\times$ \\
        \bottomrule
    \end{tabular}
\end{table}

\paragraph{Combination with Knowledge Distillation:} Structural pruning and knowledge distillation address different aspects of model compression: pruning inherits weights from a large pre-trained model to produce a compact initialization, while KD transfers knowledge from a teacher to a student through soft labels during training. Here we explore the combination of our structural pruning method with Hessian-based KD~\cite{amin2025towards}, which uses second-order information from the teacher model to guide student training. Table~\ref{Tab:Hessian KD Comparison} compares two initialization strategies for training a Layer2 L0 C128 student model with Hessian labels generated from a Layer2 L2 C128 teacher: random initialization (From Scratch) versus pruning-based initialization (L Pruning). With 9\% of MPtrj, the pruned initialization without KD (Force MAE 64.8) already outperforms random initialization with Hessian KD (68.4). Combining pruned initialization with Hessian KD further reduces it to 64.5. With 25\% of MPtrj, pruned initialization alone reaches 50.8, and adding Hessian KD yields 50.7. These results demonstrate that structural pruning and KD are complementary: pruning provides a better starting point, while KD further refines the model through teacher guidance.

\begin{table}[ht]
    \caption{\textbf{Effect of Hessian-based KD combined with structural pruning} (target: Layer2 L0 C128). MAE for Energy (E, meV/atom), Force (F, meV/\AA), Stress (S, meV/\AA$^3$) on MPtrj.}
    \label{Tab:Hessian KD Comparison}
    \centering
    \small
    \begin{tabular}{l|l|c|ccc}
        \toprule
        \textbf{Init.} & \textbf{Objective} & \textbf{Data \%} & \textbf{MAE E} & \textbf{MAE F} & \textbf{MAE S} \\
        \midrule
        From Scratch   & Standard     & 9  & 66.5 & 89.2 & 2.7 \\
        From Scratch   & +Hessian KD  & 9  & \textbf{37.2} & \textbf{68.4} & \textbf{2.3} \\
        \midrule
        L Pruning      & Standard     & 9  & 36.2 & 64.8 & \textbf{2.2} \\
        L Pruning      & +Hessian KD  & 9  & \textbf{32.8} & \textbf{64.5} & \textbf{2.2} \\
        \midrule
        L Pruning      & Standard     & 25 & 30.9  & 50.8  & \textbf{1.9}  \\
        L Pruning      & +Hessian KD  & 25 & \textbf{23.6}  & \textbf{50.7}  & \textbf{1.9}  \\
        \bottomrule
    \end{tabular}
\end{table}

These results confirm that our structural pruning method can be combined with both quantization and knowledge distillation. Because each technique addresses a fundamentally different axis of model cost (parameter count, bit-precision, and generalization gap, respectively), their benefits compound rather than overlap. This complementarity means practitioners can freely compose these techniques according to their deployment constraints, applying any subset of the three without conflict.

\section{Conclusion}

We present a structural pruning framework for SO(3) equivariant atomistic foundation models. By pruning at the $(k, l)$-block granularity with an SO(3) invariant importance criterion, our method produces compact foundation models that strictly preserve SO(3) equivariance and achieve 1.5$\times$--4$\times$ parameter reduction. From the accuracy perspective, pruned models outperform same-size models trained from scratch on both in-distribution and out-of-distribution benchmarks. From the efficiency perspective, they reduce training cost by 2.5$\times$--4$\times$ and achieve 2.7$\times$ inference speedup. From the application perspective, they serve as superior initializations for downstream fine-tuning, reducing energy and force errors by 70.1\% and 34.4\% compared to training from scratch. Structural pruning can be combined with quantization and knowledge distillation for compounding gains.

The core principle of our method, evaluating the importance of coupled irreducible representations, is broadly applicable to any equivariant architecture that represents features as sums of SO(3) irreducible representations, as validated on MACE, SevenNet, and eSCN. A current limitation is that the target pruned architecture must be specified in advance. Incorporating neural architecture search to automatically discover optimal configurations is a promising direction for future work, enabling more flexible compression tailored to specific deployment constraints.

{
\small

\bibliographystyle{unsrtnat}
\bibliography{ref}
}


\newpage
\appendix

\section{Background} \label{AppSec:Background}

\paragraph{Machine-Learning Interatomic Potentials (MLIPs).}

The primary objective of MLIPs is to approximate the Potential Energy Surface (PES) of an atomic system with the accuracy of \textit{ab initio} quantum mechanical methods but at a fraction of the computational cost. Consider a system composed of $N$ atoms, defined by their Cartesian coordinates $\mathbf{R} = \{\mathbf{r}_1, \dots, \mathbf{r}_N\} \in \mathbb{R}^{N \times 3}$ and their chemical species (atomic numbers) $\mathbf{Z} = \{z_1, \dots, z_N\} \in \mathbb{Z}^N$. An MLIP learns a parameterized function $E = \Phi_\theta(\mathbf{R}, \mathbf{Z})$ that predicts the total potential energy $E$ of the system.

Additionally, these models also provide accurate predictions of interatomic forces to drive MD simulations. Since the force field is conservative, the force $\mathbf{F}_i$ acting on the $i$-th atom is derived as the negative gradient of the energy with respect to its position:
\begin{equation}
    \mathbf{F}_i = -\nabla_{\mathbf{r}_i} \Phi_\theta(\mathbf{R}, \mathbf{Z}).
\end{equation}

Atomistic foundation models extend this framework by training $\Phi_\theta$ on massive datasets spanning diverse chemical spaces to achieve broad generalizability across unseen systems. To ensure physical consistency and data efficiency, the architecture of $\Phi_\theta$ is typically designed to respect the geometric symmetries of the 3D space.

\paragraph{SO(3) Equivariant Message Passing.}

MLIPs typically ensure physical symmetries by constructing an energy function $E$ that is SE(3) (translational and rotational) invariant. Since forces are derived via gradients, the invariance of $E$ naturally guarantees the equivariance of $\mathbf{F}_i$. While early architectures relied on invariant scalars (e.g., distances, angles), recent research indicates that incorporating SO(3) (rotational) equivariant geometric tensors significantly improves data efficiency and accuracy~\cite{geiger2022e3nn, batzner20223}.

In this framework, node features are generalized to include higher-order tensors. We denote the feature component of atom $j$ at layer $t$ as $h_{j, k l m}^{(t)}$, where $k$ indexes the feature channel, and $l, m$ denote the order and representation index of the SO(3) irreducible representation. The SO(3) Equivariant Message Passing aggregates information from neighbors via a tensor product~\cite{thomas2018tensor}. The message for atom $i$ at channel $k$ with target order $l_3$ and representation $m_3$ is given by:
\begin{equation}
    \sum_{l_1 m_1, l_2 m_2} C_{l_1 m_1, l_2 m_2}^{l_3 m_3} \sum_{j \in \mathcal{N}(i)} R_{k l_1 l_2 l_3}^{(t)}(r_{ji}) Y_{l_1}^{m_1}(\hat{\mathbf{r}}_{ji}) h_{j, k l_2 m_2}^{(t)}.
\end{equation}
Here, $h_{j, k l_2 m_2}^{(t)}$ is the input feature of the neighbor with order $l_2$. The interaction couples these features with the spherical harmonics $Y_{l_1}^{m_1}$ of the edge direction using Clebsch-Gordan coefficients $C_{l_1 m_1, l_2 m_2}^{l_3 m_3}$, which enforce the rules ($|l_1 - l_2| \le l_3 \le (l_1 + l_2)$). The learnable radial function $R_{k l_1 l_2 l_3}^{(t)}$ modulates the interaction strength based on the interatomic distance $r_{ji}$ and the specific channel $k$. This operation has a computational complexity of $\mathcal{O}(L_{\text{max}}^6)$, making higher-order interactions computationally expensive.

The tensor product operates in a channel-wise manner, processing each channel $k$ independently. To enable information exchange across different channels, the architecture incorporates learnable linear layers to mix feature channels while strictly preserving SO(3) equivariance by operating only on features of the same order $l$:
\begin{equation}
    \sum_{\tilde{k}} W_{k \tilde{k} l}^{(t)} h_{j, \tilde{k} l m}^{(t)}.
\end{equation}

\paragraph{Structural Pruning.}

Model pruning compresses models by eliminating redundant parameters. While unstructured pruning creates sparse matrices that often require specialized hardware, structural pruning removes coherent groups, such as entire layers or channels. This approach preserves dense matrix structures, enabling direct acceleration on standard hardware. The core of structural pruning lies in identifying and removing the least significant structures based on a specific importance criterion. Common scoring methods include the magnitude~\cite{han2015deep}, activation~\cite{muralidharan2024compact}, or gradient~\cite{ma2023llm}.

Structural pruning generally operates along two dimensions: depth and width. Taking a Multi-Layer Perceptron (MLP) as an example:
\begin{equation}
    \mathbf{W}_n \sigma(\dots \sigma(\mathbf{W}_2 \sigma(\mathbf{W}_1 x))),
\end{equation}
where layer pruning reduces the depth by removing a weight matrix $\mathbf{W}_k$, effectively shortening the inference path. Conversely, channel pruning reduces the dimension of the matrices by removing specific rows in $\mathbf{W}_k$ (and corresponding columns in $\mathbf{W}_{k+1}$), which shrinks the layer size.

However, applying structural pruning to SO(3) equivariant models is non-trivial. As features are higher-order tensors grouped by order $l$ and representation $m$, naive pruning can break the symmetry of the message passing path.

\section{Motivation and Theory} \label{AppSec:Motivation and Theory}

\subsection{Depth Pruning} \label{AppSec:Depth Pruning}

Message Passing Neural Networks (MPNNs) approximate the potential energy surface of atomic systems by iteratively refining node features through interactions with neighbors. A critical hyperparameter in these architectures is the number of message passing layers, denoted as $T$. This parameter controls not only the depth of the network but also the physical range of atomic interactions the model can capture—its receptive field.

In the MLIPs architecture, the total energy is a sum of atomic energies, which are determined by the local environment of each atom. The extent of this environment is defined by the receptive field $R_{\text{rec}}$. We formalize the relationship between the network depth and the receptive field with the following proposition:

\begin{proposition}
\label{prop:receptive_field}
Let $r_c$ be the radial cutoff distance for a single message passing iteration. For a Message Passing Neural Network with $T$ layers, the total receptive field $R_{\text{rec}}$ of an atom $i$ is given by:
\begin{equation}
R_{\text{rec}} = T \times r_c
\end{equation}
\end{proposition}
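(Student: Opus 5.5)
I will prove the statement by induction on the number of message passing layers $T$, after first fixing what ``receptive field'' means. I take $R_{\text{rec}}$ of atom $i$ to be the supremum of the distances $\|\mathbf{r}_j - \mathbf{r}_i\|$ over atoms $j$ such that $h_i^{(T)}$ (and hence the atomic energy of $i$) can depend on $(\mathbf{r}_j, z_j)$. The upper bound is read as a worst case over admissible configurations, and the equality as attainment, or approach in the supremum, by some configuration.

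\textbf{Induction for the upper bound.} Using the message formula of Appendix~\ref{AppSec:Background}, the message to atom $i$ at layer $t$ sums only over $j \in \mathcal{N}(i)=\{j : \|\mathbf{r}_j-\mathbf{r}_i\| < r_c\}$. Its inputs are $R^{(t)}(r_{ji})$, $Y(\hat{\mathbf{r}}_{ji})$ and $h_j^{(t)}$, and the update combines this message with $h_i^{(t)}$ through channel-mixing linear maps. Let $S_t(i)$ be the set of atoms that $h_i^{(t)}$ depends on. The claim is that $S_t(i)\subseteq B(\mathbf{r}_i, t\,r_c)$. The base case holds because $h_i^{(0)}$ depends only on $z_i$, so $S_0(i)=\{i\}$. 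For the inductive step, $S_{t+1}(i)\subseteq S_t(i)\cup \bigcup_{j\in\mathcal{N}(i)} \bigl(S_t(j)\cup\{j\}\bigr)$, and the triangle inequality then gives radius at most $r_c + t\,r_c$. This yields $R_{\text{rec}}\le T r_c$.

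\textbf{Tightness.} I would place a chain of atoms $i=a_0,a_1,\dots,a_T$ on a line, with consecutive spacing $r_c-\varepsilon$. Along this chain there is a path of $T$ successive edges, so generically, for nonzero weights and a radial function $R$ that does not vanish below the cutoff, the dependence propagates. The end atom $a_T$ is then at distance $T(r_c-\varepsilon)$, and letting $\varepsilon\to 0$ gives the supremum $T r_c$.

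\textbf{Main obstacle.} The delicate step is tightness. The dependence could cancel for particular weights, for example if the radial function vanishes or a Clebsch--Gordan path is zero for a collinear geometry. I would handle this by stating the result for generic parameters and invoking the $l=0$ scalar path, which is always present and on which $Y_0^0$ is constant, so that the chain dependence is nonzero for generic weights. Without that assumption I would weaken the statement to $R_{\text{rec}}\le T r_c$. Since the paper uses the proposition to argue that depth pruning shrinks the receptive field, the upper bound together with the generic attainment is exactly what is needed.
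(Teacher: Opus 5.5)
Your upper-bound argument is the same induction on the number of layers, with the triangle inequality $d_{im}\le d_{ij}+d_{jm}$, that the paper uses; starting at $T=0$ with $S_0(i)=\{i\}$ instead of at $T=1$ makes no difference. The paper's proof only shows containment in the ball of radius $T r_c$, and it asserts without derivation that features depend on \emph{all} atoms within $k r_c$, so your chain with spacing $r_c-\varepsilon$ and the generic-parameter argument through the scalar $l=0$ path supply the attainment that the stated equality requires and the paper leaves implicit.
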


\begin{proof}
We proceed by induction on the number of layers $T$.

Base case ($T=1$): In the first layer, atom $i$ receives messages from all neighbors $j$ such that the Euclidean distance $d_{ij} \le r_c$. Thus, the features $\boldsymbol{h}_i^{(1)}$ depend on atoms within a radius $R_{\text{rec}}^{(1)} = r_c$.

Inductive step: Assume that for $T=k$, the features $\boldsymbol{h}_i^{(k)}$ depend on all atoms within a distance $k \times r_c$. In the $(k+1)$-th layer, atom $i$ aggregates messages from neighbors $j$ within distance $r_c$. Each neighbor $j$ has features $\boldsymbol{h}_j^{(k)}$, which depend on atoms within distance $k \times r_c$ from $j$ by the inductive hypothesis.
By the triangle inequality, an atom $m$ influencing $j$ (where $d_{jm} \le k \times r_c$) is at a distance from $i$ satisfying:
\begin{equation}
d_{im} \le d_{ij} + d_{jm} \le r_c + k \times r_c = (k+1)r_c
\end{equation}
Therefore, the features $\boldsymbol{h}_i^{(k+1)}$ depend on atoms within a radius $R_{\text{rec}}^{(k+1)} = (k+1)r_c$.
\end{proof}

This linear scaling implies that reducing the number of layers (Depth Pruning) directly shrinks the physical volume of the atomic environment that the model can perceive. In contrast, reducing the feature channels or order (Width Pruning) simplifies the mathematical description of the interactions within that volume but preserves the spatial scope of the receptive field. Consequently, we hypothesize that Depth Pruning will result in a more significant loss of accuracy compared to Width Pruning, particularly for systems where long-range interactions are non-negligible.

\subsection{Preservation of SO(3) Equivariance} \label{AppSec:Equivariance Preservation}

Standard pruning methods often operate on individual elements of the feature maps. We show here that such element-wise pruning breaks SO(3) equivariance, whereas our proposed $(k, l)$-block pruning preserves it.

\begin{proposition} \label{prop:equivariance_preservation}
Let $\mathbf{h}_l \in \mathbb{R}^{2l+1}$ be a feature vector of order $l$. Let $g \in SO(3)$ be a rotation, and $\mathbf{D}^l(g)$ be the Wigner-D matrix acting on $\mathbf{h}_l$.
\begin{enumerate}
    \item \textbf{Element-wise Pruning:} Let $\mathbf{M} = \text{diag}(m_{-l}, \dots, m_l)$ be an arbitrary diagonal binary mask where not all $m_i$ are equal. Applying $\mathbf{M}$ element-wise violates equivariance:
    \begin{equation}
        \mathbf{M} (\mathbf{D}^l(g) \mathbf{h}_l) \neq \mathbf{D}^l(g) (\mathbf{M} \mathbf{h}_l)
    \end{equation}
    \item \textbf{Block-wise Pruning:} Applying a scalar mask $z \in \{0, 1\}$ (our proposed method) preserves equivariance:
    \begin{equation}
        z (\mathbf{D}^l(g) \mathbf{h}_l) = \mathbf{D}^l(g) (z \mathbf{h}_l)
    \end{equation}
\end{enumerate}
\end{proposition}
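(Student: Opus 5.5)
The plan is to prove the two parts separately: part 2 is a one-line linearity argument, and part 1 reduces to the irreducibility of the Wigner-D representation. First, the quantifiers in part 1 need fixing. For $g$ the identity, or $\mathbf{h}_l = 0$, the two sides trivially agree, and for $l=0$ a mask with unequal entries cannot exist. So we read part 1 as follows: for $l \ge 1$ and any diagonal binary $\mathbf{M}$ with not all entries equal, there exist a rotation $g$ and a feature $\mathbf{h}_l$ with $\mathbf{M}\mathbf{D}^l(g)\mathbf{h}_l \neq \mathbf{D}^l(g)\mathbf{M}\mathbf{h}_l$. Equivalently, $\mathbf{M}$ fails to commute with $\mathbf{D}^l(g)$ for some $g$. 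This is the only meaningful sense of "violates equivariance", since equivariance means the identity holds for all $g$ and all inputs.

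For part 2, $z$ is a scalar and $\mathbf{D}^l(g)$ is a linear map. Hence $z(\mathbf{D}^l(g)\mathbf{h}_l) = \mathbf{D}^l(g)(z\mathbf{h}_l)$ for every $g$ and every $\mathbf{h}_l$, because scalar multiplication commutes with any matrix. This holds for both $z=0$ and $z=1$, and it extends verbatim to the full feature $\hat h_{j,klm} = z_{k,l} h_{j,klm}$. The rotation acts block-diagonally, by $\mathbf{D}^l(g)$ on each $(k,l)$ block, and the mask is constant on each block.

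For part 1, argue by contradiction. Suppose $\mathbf{M}\mathbf{D}^l(g) = \mathbf{D}^l(g)\mathbf{M}$ for all $g \in SO(3)$, and let $V = \operatorname{im}\mathbf{M}$ be the span of the basis vectors $e_m$ with $m_m = 1$. For $v = \mathbf{M}w \in V$ we get $\mathbf{D}^l(g)v = \mathbf{M}\mathbf{D}^l(g)w \in V$, so $V$ is an $SO(3)$-invariant subspace. Because the entries of $\mathbf{M}$ are binary and not all equal, $V$ is neither $\{0\}$ nor all of $\mathbb{R}^{2l+1}$. This contradicts the standard fact that the real Wigner-D representation of order $l$ is irreducible. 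Alternatively, one can invoke Schur's lemma: the real order-$l$ representation is absolutely irreducible, since its complexification is the irreducible spin-$l$ representation, so any commuting matrix must be $c\mathbf{I}$, which a mask with unequal entries is not. Having established $\mathbf{M}\mathbf{D}^l(g_0) \neq \mathbf{D}^l(g_0)\mathbf{M}$ for some $g_0$, we pick a basis vector $\mathbf{h}_l = e_m$ indexing a column where the two matrices differ, which gives the strict inequality.

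The main obstacle is making the irreducibility step concrete and self-contained rather than merely citing it. If a direct witness is wanted, we would take $l=1$, where $\mathbf{D}^1(g)$ is conjugate to the ordinary $3\times 3$ rotation matrix, and exhibit a rotation mixing a kept axis with a dropped one. For example, a $90^\circ$ rotation sending a kept coordinate direction to a pruned one gives $\mathbf{M}\mathbf{D}\mathbf{h} = 0 \neq \mathbf{D}\mathbf{M}\mathbf{h}$. For general $l$, we would rely on the invariant-subspace argument above, with irreducibility taken as a standard fact from the representation theory underlying the Clebsch--Gordan coupling in Appendix~\ref{AppSec:Background}.
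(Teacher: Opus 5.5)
Your proof is correct. Part 2 is the same one-line linearity argument the paper uses; part 1 takes a genuinely different route.

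\textbf{The paper's route.} The paper works entrywise. Applying the commutation requirement to basis vectors gives $D^l(g)_{ij}(m_i - m_j) = 0$, so $m_i = m_j$ whenever $D^l(g)_{ij} \neq 0$. It then concludes by asserting that $\mathbf{D}^l(g)$ is ``dense'' and ``generally'' has nonzero off-diagonal entries. That last step is stated rather than proved. To force all $m_i$ to be equal, you need the index pairs $(i,j)$ with $D^l(g)_{ij} \neq 0$ for some $g$ to connect all $2l+1$ indices. One way is to find a single $g$ at which every entry is nonzero. Such a $g$ exists: each entry is a real-analytic function on the connected group $SO(3)$ and is not identically zero by Schur orthogonality. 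The paper does not argue this.

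\textbf{Your route.} Commutation for all $g$ would make the coordinate subspace $\mathrm{im}\,\mathbf{M}$ a nonzero proper invariant subspace, which contradicts irreducibility of the real order-$l$ representation. This supplies the paper's missing step in cleaner form. The paper's connectivity condition is exactly the statement that no coordinate subspace is invariant, a special case of irreducibility, and you get it from a standard fact.

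\textbf{Comparison.} The entrywise argument is more hands-on and shows which matrix entries do the mixing. Yours is shorter and fully rigorous once irreducibility is granted. Your quantifier fix ($l \ge 1$, and existence of some $g$ and $\mathbf{h}_l$) is also sharper than the statement as written. Read literally, the statement fails for $g$ the identity or $\mathbf{h}_l = 0$; the paper's proof tacitly uses the same reading you make explicit.

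\textbf{A small caveat on the $l=1$ witness.} A diagonal mask stays diagonal after conjugating to Cartesian coordinates only if the change of basis is a signed permutation. That holds for the usual real spherical-harmonic basis but not for the complex spherical basis. Since your general argument does not depend on this example, it is harmless.
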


\begin{proof}
\textbf{Part 1 (Violation):}
The Wigner-D matrix $\mathbf{D}^l(g)$ is a dense $(2l+1) \times (2l+1)$ unitary matrix that mixes all components indexed by $m \in [-l, l]$.
Let $\mathbf{h}' = \mathbf{D}^l(g) \mathbf{h}$. The $i$-th component of the rotated feature is $h'_i = \sum_j D^l(g)_{ij} h_j$.
If we apply the mask $\mathbf{M}$ after rotation, the result is $m_i \sum_j D^l(g)_{ij} h_j$.
If we apply the mask $\mathbf{M}$ before rotation, the rotated masked feature is $\sum_j D^l(g)_{ij} (m_j h_j)$.
For equivariance to hold, we require $m_i \sum_j D^l(g)_{ij} h_j = \sum_j D^l(g)_{ij} m_j h_j$ for all $\mathbf{h}$ and $g$. This implies $m_i = m_j$ for all $i, j$ where $D^l(g)_{ij} \neq 0$. Since $\mathbf{D}^l(g)$ generally has non-zero off-diagonal entries (mixing components), arbitrary element-wise masking ($m_i \neq m_j$) breaks this equality.

\textbf{Part 2 (Preservation):}
Our method treats the subspace of order $l$ as an atomic unit. The mask becomes a scalar $z \in \{0, 1\}$ multiplying the entire vector $\mathbf{h}_l$. Since scalar multiplication commutes with linear transformations:
\begin{equation}
    z (\mathbf{D}^l(g) \mathbf{h}_l) = \mathbf{D}^l(g) (z \mathbf{h}_l)
\end{equation}
Thus, the pruning operation commutes with the group action, satisfying the definition of equivariance.
\end{proof}

Since the irreducible representations of SO(3) inherently mix all $m$ components during rotation, any pruning operation that differentiates between $m$ values within the same order $l$ will inevitably destroy the group structure. Therefore, the proposed $(k, l)$-block serves as the \textit{minimal pruning granularity} capable of strictly preserving SO(3) equivariance.

\subsection{Rotational Invariance of the Importance Criterion} \label{AppSec:Rotational Invariance}

We now prove that our importance criterion $I_{k,l}^{(t)}$ is SO(3) invariant, ensuring that the pruning decisions are consistent under rotations of the atomic configuration.

\begin{proposition} \label{prop:importance_invariance}
The importance score $I_{k,l}^{(t)} = \mathbb{E}_{\mathbf{R}, \mathbf{Z} \sim \mathcal{D}} \left| \sum_{j, m} \frac{\partial E}{\partial h_{j, k l m}^{(t)}} \cdot h_{j, k l m}^{(t)} \right|$ is invariant under SO(3) rotations of the input atomic configuration.
\end{proposition}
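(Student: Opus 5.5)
The plan is to show that the inner quantity $S_{j,kl} = \sum_m \frac{\partial E}{\partial h_{j,klm}^{(t)}} h_{j,klm}^{(t)}$ is unchanged for each configuration when the input positions are rotated by $g\in SO(3)$. The absolute value and the expectation over $\mathcal{D}$ then inherit this invariance. For the expectation we would either evaluate configuration by configuration, or assume the calibration distribution is rotation-invariant; both readings follow from the pointwise statement.

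The first step is to fix notation for how features and gradients transform. Take a rotated configuration $g\mathbf{R} = \{g\mathbf{r}_1,\dots,g\mathbf{r}_N\}$. Equivariance of the message-passing layers (the tensor product with Clebsch--Gordan coefficients and the order-preserving linear mixing described in Appendix~\ref{AppSec:Background}) gives $\mathbf{h}_{j,kl}^{(t)}(g\mathbf{R}) = \mathbf{D}^l(g)\,\mathbf{h}_{j,kl}^{(t)}(\mathbf{R})$, where $\mathbf{h}_{j,kl}\in\mathbb{R}^{2l+1}$ collects the $m$-components. We take the real spherical-harmonic basis, so $\mathbf{D}^l(g)$ is real orthogonal; this is the convention used in e3nn and MACE. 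The energy is invariant, $E(g\mathbf{R})=E(\mathbf{R})$.

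The second step, and the main subtle point, is to pin down how $\partial E/\partial \mathbf{h}$ transforms. We view $E$ as a function $\tilde E$ of the layer-$t$ features $\{\mathbf{h}^{(t)}_{j,kl}\}$ together with the geometry, since later layers still depend on positions through spherical harmonics. Invariance of the downstream network then gives
\begin{equation}
\tilde E\bigl(\{\mathbf{D}^l(g)\mathbf{h}_{j,kl}\}, g\mathbf{R}\bigr) = \tilde E\bigl(\{\mathbf{h}_{j,kl}\}, \mathbf{R}\bigr)
\end{equation}
for arbitrary feature inputs $\mathbf{h}$, not only those produced by the network. This holds because the downstream layers are themselves equivariant maps. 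Differentiating with respect to $\mathbf{h}_{j,kl}$ by the chain rule yields $\mathbf{D}^l(g)^\top \nabla_{\mathbf{h}}\tilde E\big|_{g} = \nabla_{\mathbf{h}}\tilde E\big|_{\mathrm{id}}$. By orthogonality this means $\nabla_{\mathbf{h}_{j,kl}} E(g\mathbf{R}) = \mathbf{D}^l(g)\,\nabla_{\mathbf{h}_{j,kl}} E(\mathbf{R})$, so the gradient transforms exactly like the feature. This is where the care is needed: the partial derivative must be taken with the downstream geometry held fixed, which matches what autograd computes.

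The final step is the computation
\begin{equation}
S_{j,kl}(g\mathbf{R}) = \bigl(\mathbf{D}^l(g)\nabla E\bigr)^\top \mathbf{D}^l(g)\mathbf{h} = \nabla E^\top \mathbf{D}^l(g)^\top\mathbf{D}^l(g)\,\mathbf{h} = S_{j,kl}(\mathbf{R}).
\end{equation}
Summing over $j$, taking the absolute value and averaging over $\mathcal{D}$ gives $I^{(t)}_{k,l}$ invariant. The sum over $m$ inside the score is precisely the inner product on the irrep, which is why the $(k,l)$-block granularity of Proposition~\ref{prop:equivariance_preservation} is essential here. Aggregating over only some of the $m$-components would not be invariant. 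If a complex basis were used instead, the same argument would go through with the Hermitian pairing and unitarity of $\mathbf{D}^l(g)$.
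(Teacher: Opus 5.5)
Your proof is correct and follows the same route as the paper: invariance of $E$ and the chain rule give the transformation law for $\partial E/\partial h^{(t)}_{j,klm}$, orthogonality of $\mathbf{D}^l(g)$ then makes it cancel inside the sum over $m$, and the absolute value and expectation inherit the invariance. Your version is actually the more careful one. You correctly obtain $\nabla_{\mathbf{h}} E(g\mathbf{R}) = \mathbf{D}^l(g)\,\nabla_{\mathbf{h}} E(\mathbf{R})$, with downstream geometry held fixed as autograd does. The paper instead has the gradient transform with $D^l(g)_{m'm}$ and invokes $\sum_m D^l(g)_{m'm}D^l(g)_{mm''}=\delta_{m'm''}$, but that sum is $([\mathbf{D}^l(g)]^2)_{m'm''}$, not the identity. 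Your law, closed by $\sum_m D^l(g)_{mm'}D^l(g)_{mm''}=\delta_{m'm''}$, is exactly the fix.
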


\begin{proof}
Let $g \in SO(3)$ be a rotation. Under this rotation, the atomic positions transform as $\mathbf{R} \to g\mathbf{R}$, and the features transform equivariantly: $h_{j,klm}^{(t)} \to \sum_{m'} D^l(g)_{mm'} h_{j,klm'}^{(t)}$, where $\mathbf{D}^l(g)$ is the Wigner-D matrix.

Since the energy $E$ is rotationally invariant by construction, we have $E(g\mathbf{R}, \mathbf{Z}) = E(\mathbf{R}, \mathbf{Z})$. By the chain rule:
\begin{equation}
\frac{\partial E}{\partial h_{j,klm}^{(t)}} \to \sum_{m'} D^l(g)_{m'm} \frac{\partial E}{\partial h_{j,klm'}^{(t)}}
\end{equation}

The inner product in the importance criterion transforms as:
\begin{align}
\sum_m \frac{\partial E}{\partial h_{j,klm}^{(t)}} \cdot h_{j,klm}^{(t)} &\to \sum_m \left(\sum_{m'} D^l(g)_{m'm} \frac{\partial E}{\partial h_{j,klm'}^{(t)}}\right) \left(\sum_{m''} D^l(g)_{mm''} h_{j,klm''}^{(t)}\right) \\
&= \sum_{m,m',m''} D^l(g)_{m'm} D^l(g)_{mm''} \frac{\partial E}{\partial h_{j,klm'}^{(t)}} h_{j,klm''}^{(t)} \\
&= \sum_{m',m''} \left(\sum_m D^l(g)_{m'm} D^l(g)_{mm''}\right) \frac{\partial E}{\partial h_{j,klm'}^{(t)}} h_{j,klm''}^{(t)}
\end{align}

Since $\mathbf{D}^l(g)$ is unitary, we have $\sum_m D^l(g)_{m'm} D^l(g)_{mm''} = \delta_{m'm''}$. Therefore:
\begin{equation}
\sum_m \frac{\partial E}{\partial h_{j,klm}^{(t)}} \cdot h_{j,klm}^{(t)} \to \sum_{m'} \frac{\partial E}{\partial h_{j,klm'}^{(t)}} h_{j,klm'}^{(t)}
\end{equation}

This shows that the sum over $m$ is invariant under rotation. Since the absolute value and expectation operations preserve this invariance, the importance score $I_{k,l}^{(t)}$ is SO(3) invariant.
\end{proof}

This invariance property ensures that our pruning method produces consistent results under rotations of the molecular system in the calibration dataset.

\subsection{Spectral Interpretation of Feature Order L} \label{AppSec:Spectral Interpretation}

Understanding the physical meaning of the order $l$ helps justify why pruning higher-order features corresponds to reducing the angular resolution of the model.

\begin{proposition} \label{prop:frequency_l}
The feature order $l$ corresponds to the angular frequency on the sphere. Higher $l$ implies higher spatial frequency, capturing finer angular details of the atomic environment.
\end{proposition}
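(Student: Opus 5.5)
The plan is to make the statement precise through the spectral theory of the Laplace--Beltrami operator $\Delta_{S^2}$ on the unit sphere. The key identity is that spherical harmonics are its eigenfunctions:
\[
\Delta_{S^2} Y_l^m = -l(l+1)\, Y_l^m .
\]
The eigenvalue $l(l+1)$ plays the role that $\omega^2$ plays for Fourier modes on a circle or torus. I will (i) recall or derive this eigenrelation; (ii) conclude that the order-$l$ subspace $\mathcal{H}_l=\mathrm{span}\{Y_l^m\}_{m=-l}^{l}$ is exactly the eigenspace for frequency $\sqrt{l(l+1)}\approx l$; and (iii) identify these subspaces with the SO(3) irreducible representations indexing $h_{j,klm}^{(t)}$. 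For step (iii), $\Delta_{S^2}$ commutes with rotations, so each eigenspace is SO(3)-invariant. By Peter--Weyl, $L^2(S^2)=\bigoplus_l \mathcal{H}_l$, and $\mathcal{H}_l$ carries the Wigner-D representation $\mathbf{D}^l$ of dimension $2l+1$, consistent with Proposition~\ref{prop:equivariance_preservation}.

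The phrase ``finer angular details'' will be made quantitative in two ways.

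\textbf{Nodal structure.} Write $Y_l^m(\theta,\phi)\propto P_l^{|m|}(\cos\theta)e^{im\phi}$. The associated Legendre factor has $l-|m|$ zeros in $\theta$, and the azimuthal factor has $2|m|$ nodal meridians. The nodal lines therefore partition the sphere into regions of angular size roughly $\pi/l$, so resolution scales like $1/l$.

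\textbf{Truncation and smoothness.} Truncating the expansion of an angular density $f$, for instance the neighbor density $\rho_i(\hat{\mathbf{r}})=\sum_j\delta(\hat{\mathbf{r}}-\hat{\mathbf{r}}_{ij})$ smoothed radially, at $L_{\max}$ is an $L^2$-orthogonal projection. Its error is
\[
\|f-P_{L_{\max}}f\|^2=\sum_{l>L_{\max}}\sum_m |f_{lm}|^2 .
\]
For $f$ in the Sobolev space $H^s(S^2)$, this is at most $(L_{\max}(L_{\max}+1))^{-s}\|(-\Delta)^{s/2}f\|^2$. Hence discarding high-$l$ features removes only the components that vary rapidly with angle, which is exactly low-pass filtering on the sphere. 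This connects the statement to the pruning interpretation in the main text.

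I expect the main obstacle to be definitional rather than technical. ``Angular frequency'' has no canonical meaning on $S^2$, so the proof must fix one: I will take it to be the square root of the Laplacian eigenvalue, with nodal counts as a supporting interpretation. A secondary subtlety is transferring the statement from functions on the sphere to the abstract features $h_{j,klm}$. I will handle this by noting that in MACE-type models the features are built as linear equivariant combinations of $Y_{l}^{m}(\hat{\mathbf{r}}_{ji})$ through the Clebsch--Gordan tensor product above. By Schur's lemma, the order-$l$ block of any such feature can only contain the $\mathcal{H}_l$ component of the underlying angular function, so it inherits the frequency $\sqrt{l(l+1)}$.
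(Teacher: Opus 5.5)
Your spectral core (the eigenrelation $-\Delta_{S^2}Y_l^m=l(l+1)Y_l^m$, the band limit at $L_{\max}$, nodal lines increasing with $l$, truncation read as low-pass filtering) is exactly the paper's argument, and your Parseval/Sobolev tail bound is a correct quantitative addition. The step that fails is your transfer to the features $h_{j,klm}^{(t)}$ via Schur's lemma. Schur's lemma only constrains \emph{linear} equivariant maps: a linear equivariant map from $L^2(S^2)$ to the order-$l$ irrep sees only the $\mathcal{H}_l$ component. The Clebsch--Gordan product you invoke is bilinear, however. It sends $Y_{l_1}(\hat{\mathbf r}_{ji})\otimes h_{j,kl_2}$ to every $l_3$ with $|l_1-l_2|\le l_3\le l_1+l_2$, and MACE's symmetric contraction is polynomial in the atomic basis. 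An order-$l$ block therefore generally depends on angular content of other orders. Two examples:
\begin{itemize}
\item The $l_1=l_2=1\to l_3=0$ path gives an invariant $\propto\sum_j R^{(t)}_{k110}(r_{ji})\,\hat{\mathbf r}_{ji}\cdot\mathbf h^{(t)}_{j,k1}$, where $\mathbf h^{(t)}_{j,k1}$ is the $l=1$ block. For fixed neighbor features this is a degree-$1$ function of the edge direction, not a degree-$0$ one.
\item The $l=0$ outputs of the symmetric contraction include $\sum_m A_{i,k2m}^2$, with $A_{i,klm}=\sum_j R_{kl}(r_{ij})Y_l^m(\hat{\mathbf r}_{ij})$. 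By the addition theorem this is a weighted sum of $P_2(\cos\theta_{jj'})$ over neighbor pairs.
\end{itemize}
The second example is exactly how models with only $l=0$ hidden features still resolve bond angles. So your claim that an order-$l$ block ``can only contain the $\mathcal{H}_l$ component'' and ``inherits the frequency $\sqrt{l(l+1)}$'' is false as a statement about how the feature depends on the atomic environment.

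The paper never makes this transfer; it simply identifies an order-$l$ block with degree-$l$ expansion coefficients. Make that identification explicit and drop Schur. Regard channel $(j,k)$ as the band-limited spherical signal $F_{jk}(\hat{\mathbf u})=\sum_{l\le L_{\max}}\sum_m h^{(t)}_{j,klm}Y_l^m(\hat{\mathbf u})$. Its order-$l$ block is by definition its $\mathcal{H}_l$ component, which transforms by $\mathbf D^l$ by your step (iii), and zeroing high-$l$ blocks is then literally low-pass filtering of $F_{jk}$. The Schur argument is valid only for features that are linear in the angular neighbor density, e.g.\ the first-layer atomic basis.

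There are also two smaller slips.
\begin{itemize}
\item The factor $e^{im\phi}$ never vanishes, so the $2|m|$ nodal meridians belong to the real harmonics built from $\cos m\phi$ and $\sin m\phi$. Also, ``regions of size $\pi/l$'' must be read as an inradius bound, since zonal harmonics have full-longitude nodal bands.
\item A sum of Dirac masses on $S^2$ is not in $L^2(S^2)$, let alone $H^s$, so your truncation bound does not apply to the neighbor density. Radial smoothing does not fix this; you need angular smearing, as in SOAP.
\end{itemize}
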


\begin{proof}
The angular dependence of the atomic features is encoded using Spherical Harmonics $Y_{lm}(\theta, \phi)$. These functions are the eigenfunctions of the Laplace operator $\Delta_{S^2}$ on the unit sphere:
\begin{equation}
    -\Delta_{S^2} Y_{lm}(\theta, \phi) = l(l+1) Y_{lm}(\theta, \phi)
\end{equation}
In harmonic analysis on the sphere, the eigenvalue $\lambda_l = l(l+1)$ is analogous to the squared frequency $|\omega|^2$ in Fourier analysis on Euclidean space.
A function $f(\theta, \phi)$ composed of spherical harmonics up to degree $L_{\text{max}}$ has a band-limit determined by $L_{\text{max}}$.
Increasing $l$ increases the number of nodal lines on the sphere, allowing the function to represent more rapid changes in angular direction. Therefore, pruning features with high $l$ is spectrally equivalent to applying a low-pass filter to the angular representation of the local atomic environment, smoothing out high-frequency geometric details while retaining the coarse structural information.
\end{proof}

To empirically validate this spectral interpretation, we present the feature importance analysis using our method in Figure \ref{Fig:Feature Importance Analysis across Orders L}. The experimental results reveal a consistent decay in feature importance as the order $l$ increases. Specifically, the $l=0$ features (low frequency) exhibit the highest importance scores, whereas higher-order features ($l=1, 2, 3$) show progressively lower significance. This observation aligns with Proposition \ref{prop:frequency_l}, confirming that the model naturally prioritizes coarse-grained geometric information. Consequently, pruning high-$l$ features is physically justified.

\begin{figure}[ht]
    \centering
    \includegraphics[width=\textwidth]{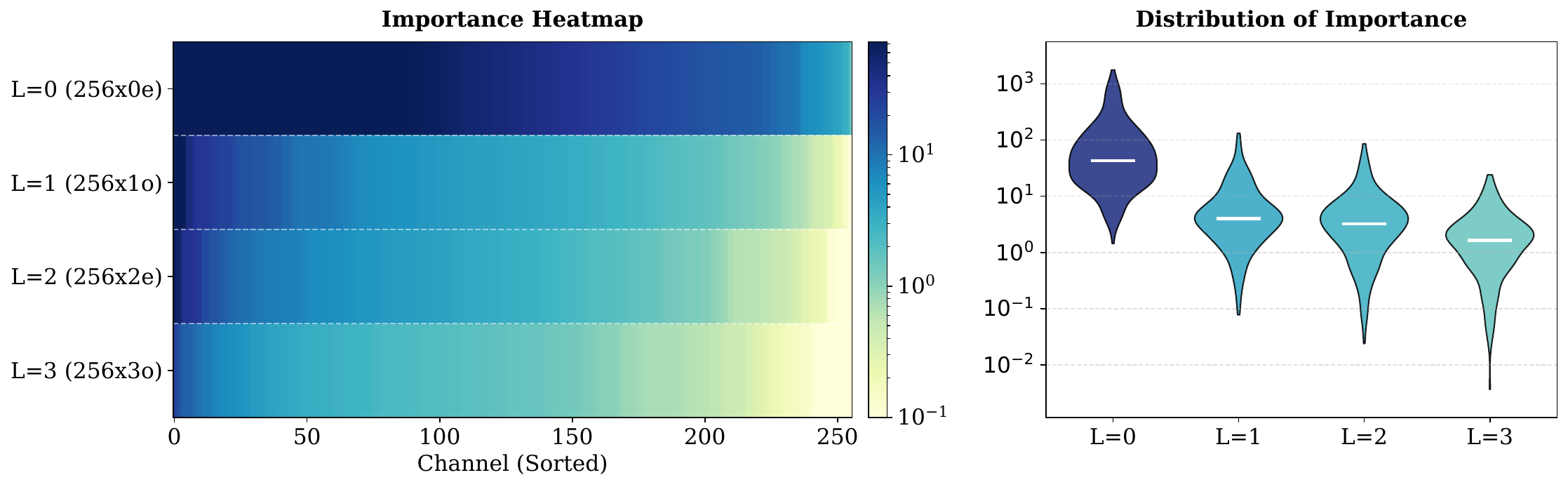}
    \caption{\textbf{Feature Importance Analysis across Orders $L$.} The left panel displays the heatmap of feature importance sorted by channel for each order $L$, while the right panel shows the corresponding distribution (violin plots).}
    \label{Fig:Feature Importance Analysis across Orders L}
\end{figure}

\section{Comparison of Depth and Width Pruning} \label{AppSec:Comparison of Depth and Width Pruning}

We perform additional comparisons of Depth (reducing the number of message passing layers) and Width pruning (reducing the order $L$ of the internal features) on the MACE architecture.

\subsection{Ablation Study: Non-linear Readout Substitution in Depth Pruning}

In the standard MACE architecture, the energy is decomposed into contributions from each layer:
\begin{equation}
    \begin{split}
        E_{i} &= E^{(0)}_i + E^{(1)}_i + ... + E^{(T)}_i, \qquad \text{where}\\
        E_i^{(t)} &= \mathcal{R}_t \left( \boldsymbol{h}_i^{(t)} \right) =
        \begin{cases}
            \sum_{\tilde{k}}W^{(t)}_{\text{readout}, \tilde{k}} h^{(t)}_{i, \tilde{k}00}    & \text{if} \;\; t < T \\[13pt]
            {\rm{MLP}}^{(t)}_\text{readout} \left(\left\{h^{(t)}_{i, k00}\right\}_k\right)  & \text{if} \;\; t = T
        \end{cases}
    \end{split}
\end{equation}
where the readout function $\mathcal{R}_t$ is linear for all intermediate layers ($t < T$) and an MLP (non-linear) only for the final layer ($t = T$). This design ensures that the final layer can approximate higher-order residual terms effectively.

When performing Depth Pruning (e.g., pruning a 2-layer model down to 1 layer), simply removing the second layer leaves the first layer with its original linear readout. We hypothesize that this significantly limits the expressivity of the pruned model. To address this, we propose a Readout Substitution strategy: when the last layer is pruned, we replace the linear readout of the new last layer with a non-linear MLP, matching the architecture of a standard $T-1$ layer MACE model.

Table \ref{Tab:Ablation of Readout Substitution in Depth pruning on MACE-MP} presents the ablation study of this substitution. We observe that retaining the linear readout results in high errors (MAE E: 64.0 meV/atom). However, applying the non-linear readout substitution yields a substantial improvement, reducing the Energy MAE by 26.3\% (to 47.2 meV/atom) and Force MAE by 10.7\%. This confirms that the non-linearity in the final readout is critical for the MACE architecture to resolve atomic energies accurately.

\begin{table}[ht]
    \caption{\textbf{Ablation of Readout Substitution in Depth pruning on MACE-MP.} We evaluate Energy (E, meV/atom), Force (F, meV/\AA), and Stress (S, meV/$\text{\AA}^3$) errors on the MPtrj.}
    \label{Tab:Ablation of Readout Substitution in Depth pruning on MACE-MP}
    \centering
    \small
    \resizebox{\textwidth}{!}{
    \begin{tabular}{l|cccc|ccc}
        \toprule
        \textbf{Source}                      & \textbf{Base Model}                   & \textbf{Pruned Model}                 & \textbf{Params}  & \textbf{Data \%}    & \textbf{MAE E} & \textbf{MAE F}  & \textbf{MAE S} \\
        \midrule
        Depth Pruning (Linear Readout)       & \multirow{2}{*}{Layer2 L2 C128} & \multirow{2}{*}{Layer1 L2 C128} & 3,453,696        & \multirow{2}{*}{9}  & 64.0           & 86.6            & 2.7            \\
        Depth Pruning (Non-linear Readout)   &                                       &                                       & 3,455,632        &                     & \textbf{47.2}  & \textbf{77.3}   & \textbf{2.4}   \\
        \bottomrule
    \end{tabular}
    }
\end{table}

\subsection{Comparison Results}

We further compare the performance of Depth Pruning (with the optimal non-linear readout) against Width Pruning. For Width Pruning, we reduce the irreducible representations of the features from $L=2$ to $L=0$, effectively reducing the geometric tensor interactions while maintaining the message passing depth. We compare these pruned models against From Scratch baselines trained directly with the target architecture sizes.

As shown in Table \ref{Tab:Comparison of Depth and Width pruning methods on MACE-MP}, Width Pruning demonstrates superior performance. The width-pruned model (Layer 2 L0) achieves an Energy MAE of 30.9 meV/atom, which is competitive with the baseline trained from scratch (29.6 meV/atom). This suggests that initializing with higher-order equivariant features ($L=2$) allows the model to learn a robust underlying representation that can be effectively transferred into invariant features ($L=0$).

Conversely, Depth Pruning fails to match the baseline performance. The depth-pruned model (Layer 1 L2) exhibits an Energy MAE of 44.4 meV/atom, a 50\% increase in error compared to the comparable scratch baseline. This disparity highlights a fundamental characteristic of the message-passing architecture: the receptive field is tightly coupled to the number of layers, as established in Proposition \ref{prop:receptive_field}. Depth Pruning truncates the message passing iterations, linearly reducing the physical receptive field ($R_{\text{rec}} = T \times r_c$). This loss of long-range spatial information proves critical for accuracy, whereas Width Pruning retains the full receptive field while only simplifying the local feature representation.

\begin{table}[ht]
    \caption{\textbf{Comparison of Depth and Width pruning methods on MACE-MP.} The table compares the performance of pruned models against baselines trained from scratch. Metrics include MAE for Energy (E, meV/atom), Force (F, meV/\AA), and Stress (S, meV/$\text{\AA}^3$) on MPtrj, and Energy (E, meV/atom) on the sAlex subset. Data \% indicates the fraction of training data used. Values in parentheses show the relative change in MAE compared to from-scratch trained models of similar size. Bold values indicate that the pruned model outperforms the from-scratch trained baseline.}
    \label{Tab:Comparison of Depth and Width pruning methods on MACE-MP}
    \centering
    \small
    \resizebox{\textwidth}{!}{
    \begin{tabular}{l|cccc|lll|l}
        \toprule
        \textbf{Method}                       & \textbf{Base Model} & \textbf{Pruned Model} & \textbf{Params}  & \textbf{Data \%}     & \textbf{MAE E}          & \textbf{MAE F}          & \textbf{MAE S}         & \textbf{sAlex MAE E}    \\
        \midrule
        \multirow{4}{*}{From Scratch}         & Layer2 L1 C256      & \multirow{4}{*}{/}    & 15,847,440       & \multirow{4}{*}{100}  & /                       & /                       & /                      & 81.7                   \\
                                              & Layer2 L2 C128      &                       & 5,725,072        &                      & 22.6                    & 42.2                    & 1.6                    & 65.1                   \\
                                              & Layer2 L1 C128      &                       & 4,688,656        &                      & 25.8                    & 46.7                    & 1.7                    & 71.4                   \\
                                              & Layer2 L0 C128      &                       & 3,847,696        &                      & 29.6                    & 52.3                    & 1.8                    & 78.0                   \\
        \midrule
        Depth Pruning                         & Layer2 L2 C128      & Layer1 L2 C128        & 3,455,632        & \textbf{25}          & 44.4 (+50.0\%)          & 63.1 (+20.7\%)          & 2.1 (+16.7\%)          & 99.0 (+26.9\%)         \\
        Width Pruning                         & Layer2 L2 C128      & Layer2 L0 C128        & 3,847,696        & \textbf{25}          & 30.9 (+4.4\%) & \textbf{50.8 (-2.9\%)}  & 1.9 (+5.5\%)           & \textbf{74.1 (-5.0\%)} \\
        \bottomrule
    \end{tabular}
    }
\end{table}

\section{Experimental Details} \label{AppSec:Experimental Details}

\subsection{Datasets} \label{AppSec:Datasets}

\paragraph{Inorganic Downstream Tasks.} We evaluate on eight diverse inorganic datasets: solid-state electrolytes (SSE)~\cite{huang2021deep}, H2O~\cite{zhang2021phase}, AgAu~\cite{wang2021generalizable}, AlMgCu~\cite{jiang2021accurate}, Cu~\cite{zhang2020dp}, Ti~\cite{wen2021specialising}, V~\cite{wang2022classical}, and W~\cite{wang2022tungsten}. These datasets cover simulations of diverse inorganic compounds under complex temperature and pressure conditions.

\paragraph{Organic Downstream Tasks.} For organic systems, we evaluate on 3BPA~\cite{kovacs2021linear}, a flexible drug-like molecule tested across temperature regimes (300 K, 600 K, 1200 K) and dihedral angle configurations; AcAc~\cite{batatia2022design} (acetylacetone), which probes complex dihedral potential energy surfaces; and rMD17~\cite{christensen2020role}, a benchmark of small organic molecules with high-quality DFT-computed energies and forces.

\subsection{Calibration}

To compute the importance scores during the calibration phase, we construct representative subsets of the pre-training data tailored to the specific distribution of each dataset. For the MPtrj dataset, which contains approximately 1.5 million configurations derived from roughly 150,000 unique structures, we employ a structure-based sampling approach to ensure chemical diversity and prevent bias toward trajectories with higher frame counts. Specifically, we randomly select $n$ configurations for each unique structure. Setting $n=1$, $n=3$, and $n=5$ results in calibration subsets covering approximately 9\%, 25\%, and 40\% of the total dataset, respectively. For the SPICE dataset, we perform uniform sampling within each data category to generate the calibration subset.

\subsection{Pruning Details for MACE Architecture }

\paragraph{Embedding.} We investigate the impact of edge embedding dimensions when pruning feature channels. A discrepancy exists between the source and target architectures: the large base model (256 channels) utilizes a smaller edge embedding dimension compared to the standard configuration of the target small model (128 channels). Consequently, direct pruning results in a "Small Embedding" variant that inherits the restricted dimensionality of the source. To align with the target architecture, we evaluate a "Large Embedding" strategy where the embedding layers are re-initialized to the larger dimension typical of the small model. As shown in Table~\ref{Tab:Ablation of embedding size in channel pruning on MACE-MP}, aligning the embedding size (Large Embedding) outperforms the direct inheritance (Small Embedding), reducing the Energy MAE from 45.3 to 42.7 meV/atom. To ensure a fair comparison with standard model configurations, we adopt this re-initialization strategy in our main experiments, scaling the embedding weights to the larger dimension after pruning.

\begin{table}[ht]
    \caption{\textbf{Ablation of embedding size in channel pruning on MACE-MP.} We evaluate Energy (E, meV/atom), Force (F, meV/\AA), and Stress (S, meV/$\text{\AA}^3$) errors on the MPtrj. The channels that are pruned here are randomly selected.}
    \label{Tab:Ablation of embedding size in channel pruning on MACE-MP}
    \centering
    \small
    \resizebox{\textwidth}{!}{
    \begin{tabular}{l|cccc|lll}
        \toprule
        \textbf{Source}                        & \textbf{Base Model}                   & \textbf{Pruned Model}                 & \textbf{Params} & \textbf{Data \%}    & \textbf{MAE E} & \textbf{MAE F}  & \textbf{MAE S}  \\
        \midrule
        Channel Pruning (Small Embedding)      & \multirow{2}{*}{Layer2 L1 C256} & \multirow{2}{*}{Layer2 L1 C128} & 4,688,400       & \multirow{2}{*}{9}  & 45.3          & 68.7             & \textbf{2.3}    \\
        Channel Pruning (Large Embedding)      &                                       &                                       & 4,688,656       &                     & \textbf{42.7} & \textbf{68.6}    & 2.4             \\
        \bottomrule
    \end{tabular}
    }
\end{table}

\paragraph{High Body Order Features.} In the MACE architecture, high body-order features are synthesized via a symmetric contraction of atomic basis functions, mathematically realized through Generalized Clebsch-Gordan (GCG) coefficients. Our pruning method extends naturally to this mechanism by associating the importance score $I_{k, l}^{(t)}$ with the output of these contraction steps. When a feature block $(k, l)$ representing a specific many-body interaction is pruned, we physically slice the corresponding dimensions in the CG coupling tensor during the Structural Alignment phase. This effectively removes the computational graph branches responsible for calculating specific higher-order correlations. By pruning these features based on the energy-force sensitivity criterion, the model selectively discards computationally expensive many-body terms.

\paragraph{Residual Connections.} The presence of residual connections requires strict dimensional consistency between the input and output of a computational block. In MACE, this pathway is introduced by an element-dependent linear projection, formally defined as:
\begin{equation}
    \sum_{\tilde{k}} W_{z_i, k \tilde{k} l}^{(t)} h_{i, \tilde{k} l m}^{(t)},
\end{equation}
where $W_{z_i, k \tilde{k} l}^{(t)}$ represents the learnable weights specific to the atomic species $z_i$, mapping input channels $\tilde{k}$ to output channels $k$ for order $l$. During Structural Alignment, we physically slice this weight tensor to retain only the intersection of active input and output channels defined by the pruning masks. Furthermore, a unique advantage of this element-wise parameterization is the ability to perform species-specific compression. Since the weights are independent for each element type $z_i$, parameters corresponding to chemical elements not present in the target downstream dataset can be explicitly removed prior to fine-tuning. This step, combined with channel slicing, results in a highly compact model specialized for both the geometric structure and the chemical composition of the target system.

\subsection{Ablation Study on Importance Criterion} \label{AppSec:Ablation Importance}

We validate the effectiveness of our proposed importance estimation method by pruning a pre-trained MACE-MP model from Layer2 L1 C256 to Layer2 L1 C128 and performing lightweight retraining using 9\% of the MPtrj dataset. Table~\ref{Tab:Comparison of different importance estimation methods on MACE-MP} compares our method against three baselines using the same pruning granularity: Random Pruning, Magnitude-based Pruning, and Activation-based Pruning.

\begin{table}[ht]
    \caption{\textbf{Comparison of different importance estimation methods on MACE-MP.} The model is pruned from Layer2 L1 C256 to Layer2 L1 C128 and retrained using 9\% of the MPtrj. Metrics include MAE for Energy (E, meV/atom), Force (F, meV/\AA), and Stress (S, meV/$\text{\AA}^3$) on MPtrj. The best results are in bold.}
    \label{Tab:Comparison of different importance estimation methods on MACE-MP}
    \centering
    \small
    \begin{tabular}{l|ccc}
        \toprule
        Method              & MAE E          & MAE F          & MAE S        \\
        \midrule
        Random              & 42.7           & 68.6           & 2.4          \\
        Magnitude           & 47.7           & 68.9           & 2.5          \\
        Activation          & 36.4           & 64.4           & \textbf{2.3} \\
        \textbf{Our Method} & \textbf{35.9}  & \textbf{64.2}  & \textbf{2.3} \\
        \bottomrule
    \end{tabular}
\end{table}

Our physics-informed criterion yields the lowest errors across all metrics. This indicates that measuring the sensitivity of the potential energy surface to specific geometric tensors provides a more robust pruning signal than magnitude or activation alone.

\subsection{Ablation Study on Retraining} \label{AppSec:Ablation Retraining}

To validate the necessity of the Lightweight Retraining stage, we conduct an ablation study on the AgAu dataset. We compared three strategies: training the compact architecture from scratch with random initialization, directly fine-tuning the pruned model without retraining, and our full pipeline, which includes lightweight retraining on 25\% of the pre-training data.

As shown in Table \ref{Tab:Ablation study of Retraining}, training From Scratch yields poor convergence (MAE E: 361.5 meV/atom), demonstrating the value of initializing from a pre-trained foundation model. Furthermore, skipping the retraining phase (Pruning – Finetuning) results in higher errors compared to our proposed method. By incorporating Lightweight Retraining, the model effectively recovers from pruning-induced approximation errors, significantly reducing the Energy MAE from 16.9 to 6.6 meV/atom and Force MAE from 18.4 to 15.1 meV/\AA. This confirms that a brief adaptation phase is crucial for maximizing the performance of the compact model.

\begin{table}[ht]
    \caption{\textbf{Ablation study of Retraining.} We evaluate Energy (E, meV/atom) and Force (F, meV/\AA) errors on the AgAu dataset.}
    \label{Tab:Ablation study of Retraining}
    \centering
    \small
    \begin{tabular}{l|ll}
        \toprule
        \textbf{Method}                   & \textbf{MAE E} & \textbf{MAE F} \\
        \midrule
        From Scratch                      & 361.5          & 34.5           \\
        \midrule
        Pruning – Finetuning              & 16.9           & 18.4           \\
        Pruning (25\%) – Finetuning       & \textbf{6.6}   & \textbf{15.1}  \\
        \bottomrule
    \end{tabular}
\end{table}

\subsection{Retraining Hyperparameters}

We adopt the official training configurations provided by MACE for the retraining process. To ensure reproducibility, the detailed hyperparameters are listed in Table \ref{Tab:Hyperparameters}.

\begin{table}[ht]
    \caption{\textbf{Retraining Hyperparameters.}}
    \label{Tab:Hyperparameters}
    \centering
    \small
    \resizebox{\textwidth}{!}{
    \begin{tabular}{l|ccccccccccc}
    \toprule
                      & \textbf{Epochs}  & \textbf{Loss} & \textbf{E\_weight} & \textbf{F\_weight} & \textbf{S\_weight} & \textbf{lr} & \textbf{Weight\_decay} & \textbf{Patience} & \textbf{EMA} & \textbf{Clip} \\
    \midrule
    \textbf{MACE-MP}  & 200              & universal     & 1                  & 10                 & 100                & 0.005       & 1e-8              & 5                 & 0.995        & 100           \\
    \textbf{MACE-OFF} & 190              & ef            & 40                 & 1000               & /                  & 0.01        & 5e-10             & 20                & 0.99         & 1             \\
    \bottomrule
    \end{tabular}
    }
\end{table}

\subsection{Feature Importance Stability After Retraining} \label{AppSec:Importance Stability}

A natural concern is whether the importance distribution across $(k, l)$ blocks shifts significantly after retraining, which could render the initial pruning criterion suboptimal in hindsight. To address this, we analyze the evolution of importance scores for the 128 retained channels in a model pruned from Layer2 L1 C256 to Layer2 L1 C128, comparing scores computed immediately after pruning against those computed after retraining.

Figure~\ref{Fig:Feature Importance Analysis across Orders L} (Appendix~\ref{AppSec:Spectral Interpretation}) shows the overall importance distribution across orders $L$ before pruning. To quantify stability after retraining, we compute the Spearman rank correlation between the pre- and post-retraining importance scores across all 128 retained channels, obtaining a value of \textbf{0.81}. This strong positive correlation indicates that the global relative ranking of feature importance is largely preserved. Furthermore, we examine the overlap of the top-16 channels (top 12.5\%) before and after retraining: \textbf{14 out of 16 channels} (87.5\%) remain identical. This demonstrates that the core structural pathways identified by our gradient-activation criterion remain the most heavily utilized by the retrained model, validating the reliability of the one-shot pruning criterion.

\subsection{End-to-End Pipeline Cost} \label{AppSec:Pipeline Cost}

Table~\ref{Tab:Pipeline Cost Comparison} provides a concrete end-to-end comparison of GPU hours and wall-clock time for our structural pruning pipeline versus training a compact foundation model from scratch, both evaluated on 4$\times$ NVIDIA A800 GPUs.

\begin{table}[ht]
    \caption{\textbf{End-to-end training cost comparison on 4$\times$ A800 GPUs.} Our pruning pipeline reduces total GPU hours by $\sim$4$\times$ relative to pre-training the same compact architecture from scratch.}
    \label{Tab:Pipeline Cost Comparison}
    \centering
    \small
    \begin{tabular}{l|l|cc}
        \toprule
        \textbf{Strategy} & \textbf{Stage} & \textbf{GPU Hours} & \textbf{Wall-Clock (h)} \\
        \midrule
        \multirow{3}{*}{From Scratch} & Pre-training        & 750 & 188 \\
                                      & Fine-tuning         & 35  & 35  \\
                                      & \textbf{Total}      & \textbf{785} & \textbf{223} \\
        \midrule
        \multirow{4}{*}{Pruning (Ours)} & Calibration \& Pruning & 1   & 1  \\
                                        & Retraining             & 180 & 45 \\
                                        & Fine-tuning            & 35  & 35 \\
                                        & \textbf{Total}         & \textbf{216} & \textbf{81} \\
        \bottomrule
    \end{tabular}
\end{table}

The pruning-retraining phase requires only 181 GPU hours (1 + 180), compared to 750 GPU hours for pre-training the same compact architecture from scratch, directly validating our claimed $\sim$4$\times$ reduction in training cost. The fine-tuning cost is identical in both pipelines. In total, our pipeline reduces end-to-end GPU hours from 785 to 216 and wall-clock time from 223 hours to 81 hours.

\subsection{Throughput and Memory Usage} \label{AppSec:Throughput and Memory Usage}

To validate the efficiency gains achieved through our structural pruning approach, we evaluate the inference throughput and memory consumption of models with varying sizes on an NVIDIA A800 GPU. We conduct two sets of experiments: one with a batch size of 1 to measure latency, and another to maximize the batch size to evaluate peak throughput. The experiments are conducted using the MPtrj dataset, sampling one configuration from each of the approximately 150,000 unique structures. For the maximum batch size tests, we utilize cuEquivariance~\cite{geiger2025accelerate} v0.8.1, which accelerates computation and reduces memory footprint in such memory-bound scenarios.

Table \ref{Tab:Throughput and Memory Usage of models with different sizes} presents the performance comparison. The results demonstrate that structural pruning significantly reduces computational costs. Compared to the baseline, the pruned models achieve a memory reduction of up to 5.7$\times$ and a throughput improvement of up to 2.7$\times$.

\begin{table}[ht]
    \caption{\textbf{Throughput and Memory Usage comparison of models with different sizes.} We evaluate performance in two regimes: single-sample inference (Batch Size = 1) to measure latency, and maximum batch size to measure peak throughput.}
    \label{Tab:Throughput and Memory Usage of models with different sizes}
    \centering
    \small
    \resizebox{\textwidth}{!}{
    \begin{tabular}{l|cc|ccc}
        \toprule
        \multirow{2}{*}{\textbf{Model}} & \multicolumn{2}{c|}{\textbf{Batch Size = 1}} & \multicolumn{3}{c}{\textbf{Max Batch Size}} \\
        \cmidrule(lr){2-3} \cmidrule(lr){4-6}
        & \textbf{Throughput (atom/s)} & \textbf{Memory (GB)} & \textbf{Batch Size} & \textbf{Throughput (atom/s)} & \textbf{Memory (GB)} \\
        \midrule
        Layer2 L2 C128 & 1,204  & 10.3 & 64  & 14,031  & 58.6 \\
        Layer2 L1 C128 & 1,532  & 4.7  & 128 & 23,879  & 62.4 \\
        Layer2 L0 C128 & 2,017  & 1.8  & 256 & 38,043  & 50.1 \\
        \bottomrule
    \end{tabular}
    }
\end{table}

\subsection{MD Simulation Validation} \label{AppSec:MD Simulation}

While energy and force MAE are standard surrogates for MLIP quality, the ultimate test of a potential is its behavior in actual MD simulations, where small force errors can accumulate over long trajectories. To validate that our pruned models produce stable and physically accurate dynamics, we conduct MD simulations on liquid water at 300 K using the pruned MACE-MP model (25\% data) and compare against the full pre-trained model.

We run two types of simulations: (1) a \textbf{10 ps NVE} simulation to assess energy conservation (energy drift), and (2) a \textbf{20 ps NVT} simulation to evaluate structural properties, specifically the radial distribution functions (RDFs). Results are summarized in Table~\ref{Tab:MD simulation validation}.

\begin{table}[ht]
    \caption{\textbf{MD simulation validation of pruned vs.\ full model on liquid water at 300 K.} RDF peak positions are reported for O–O, O–H, and H–H pairs. The pruned model maintains near-identical energy conservation and structural properties.}
    \label{Tab:MD simulation validation}
    \centering
    \small
    \begin{tabular}{l|c|ccc}
        \toprule
        \multirow{2}{*}{\textbf{Model}} & \textbf{NVE Energy Drift} & \multicolumn{3}{c}{\textbf{RDF 1st / 2nd Peak (\AA)}} \\
        \cmidrule(lr){3-5}
        & \textbf{(meV/atom/ps) $\downarrow$} & \textbf{O–O} & \textbf{O–H} & \textbf{H–H} \\
        \midrule
        Pre-training (100\%) & $<0.01$ & 2.72 / 4.42 & 0.98 / 1.75 & 1.56 / 2.29 \\
        Pruning (25\%)       & $<0.01$ & 2.72 / 4.52 & 0.98 / 1.77 & 1.56 / 2.31 \\
        \bottomrule
    \end{tabular}
\end{table}

Both models achieve near-perfect energy conservation ($<0.01$ meV/atom/ps), confirming a smooth potential energy surface. The pruned model faithfully reproduces the first-shell RDF peak positions for all three pair types (Figure~\ref{Fig:RDF Compare}), indicating that the local liquid structure is well preserved. These results confirm that structural pruning does not destabilize MD trajectories and produces dynamics that are physically consistent with the full model.

\begin{figure}[ht]
    \centering
    \includegraphics[width=\columnwidth]{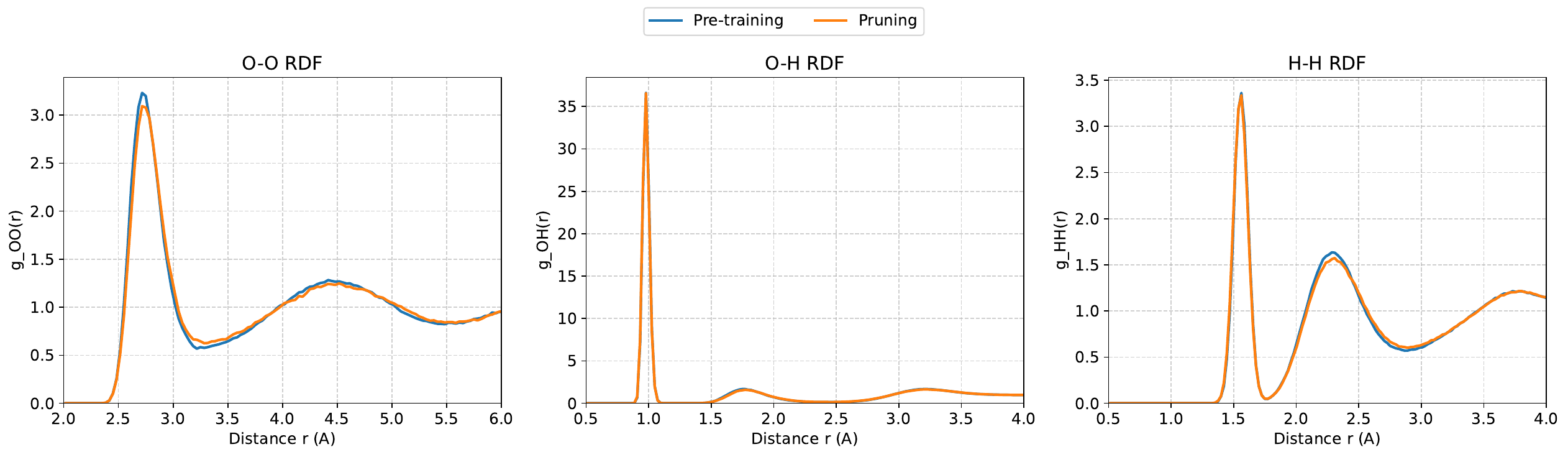}
    \caption{\textbf{Radial distribution functions from NVT MD simulation of liquid water at 300 K.} O–O, O–H, and H–H RDFs are shown for the full pre-trained model (blue) and the pruned model (orange). The two curves overlap closely, confirming that structural pruning preserves the accuracy of local liquid-state structure.}
    \label{Fig:RDF Compare}
\end{figure}

\subsection{Organic Molecule Benchmarks} \label{AppSec:MACE-OFF Downstream}

We evaluate the pruned MACE-OFF model on three organic benchmarks covering downstream fine-tuning and ultra-low data regimes.

\paragraph{Downstream Fine-tuning on 3BPA and AcAc.} Tables~\ref{Tab:Effectiveness of pruned MACE-OFF model as initialization for 3BPA fine-tuning} and~\ref{Tab:Effectiveness of pruned MACE-OFF model as initialization for acetylacetone fine-tuning} report RMSE for energy and forces on 3BPA~\cite{kovacs2021linear} (a flexible drug-like molecule tested across temperature regimes and dihedral configurations) and AcAc~\cite{batatia2022design}. Using only 50\% of the SPICE pre-training data, the pruned model matches or exceeds the from-scratch baseline in most conditions, confirming that compact specialized models retain the expressivity needed for complex organic potential energy surfaces.

\begin{table}[ht]
    \caption{\textbf{Effectiveness of pruned MACE-OFF model as initialization for 3BPA fine-tuning.} We evaluate RMSE for Energy (E, meV) and Force (F, meV/\AA). The training set is collected at 300 K. Standard deviations are computed over three runs and shown in parentheses. The best results are in bold.}
    \label{Tab:Effectiveness of pruned MACE-OFF model as initialization for 3BPA fine-tuning}
    \centering
    \small
    \begin{tabular}{ll|ccc}
    \toprule
    \multicolumn{2}{c|}{\multirow{2}{*}{\textbf{Dataset}}} & \multicolumn{3}{c}{\textbf{Initialization}} \\
    \cmidrule(l){3-5}
    \multicolumn{2}{c|}{} & From Scratch & Pre-training (100\%) & \textbf{Pruning (50\%)} \\
    \midrule
    \multirow{2}{*}{300 K}
    & E  & \textbf{3.0 (0.2)}     & 3.3 (0.03)           & \textbf{3.0 (0.08)}  \\
    & F  & 8.8 (0.3)              &\textbf{7.8 (0.01)}   & \textbf{7.8 (0.07)}  \\
    \midrule
    \multirow{2}{*}{600 K}
    & E  & 9.7 (0.5)              & 7.3 (0.04)           & \textbf{7.1 (0.12)}  \\
    & F  & 21.8 (0.6)             & \textbf{16.6 (0.05)} & 16.7 (0.13)          \\
    \midrule
    \multirow{2}{*}{1200 K}
    & E  & 29.8 (1.0)             & \textbf{20.3 (0.17)} & 21.7 (0.96)          \\
    & F  & 62.0 (0.7)             & \textbf{48.7 (0.56)} & \textbf{48.7 (0.7)}  \\
    \midrule
    \multirow{2}{*}{Dihedral Slices}
    & E  & 7.8 (0.6)              & \textbf{7.3 (0.28)}  & 7.6 (0.25)           \\
    & F  & 16.5 (1.7)             & 12.3 (0.10)          & \textbf{11.8 (0.11)} \\
    \bottomrule
    \end{tabular}
\end{table}

\begin{table}[ht]
    \caption{\textbf{Effectiveness of pruned MACE-OFF model as initialization for AcAc fine-tuning.} We evaluate RMSE for Energy (E, meV) and Force (F, meV/\AA). The training set is collected at 300 K. Standard deviations are computed over three runs and shown in parentheses. The best results are in bold.}
    \label{Tab:Effectiveness of pruned MACE-OFF model as initialization for acetylacetone fine-tuning}
    \centering
    \small
    \begin{tabular}{ll|ccc}
    \toprule
    \multicolumn{2}{c|}{\multirow{2}{*}{\textbf{Dataset}}} & \multicolumn{3}{c}{\textbf{Initialization}} \\
    \cmidrule(l){3-5}
    \multicolumn{2}{c|}{} & From Scratch & Pre-training (100\%) & \textbf{Pruning (50\%)} \\
    \midrule
    \multirow{2}{*}{300 K}
    & E                & \textbf{0.9 (0.03)} & 1.0 (0.02)          & \textbf{0.9 (0.02)}   \\
    & F                & \textbf{5.1 (0.10)} & \textbf{5.1 (0.07)} & 5.3 (0.07)            \\
    \midrule
    \multirow{2}{*}{600 K}
    & E                & \textbf{4.6 (0.3)} & 5.8 (0.28)           & 5.6 (0.02)            \\
    & F                & 22.4 (0.9) 	    & \textbf{16.4 (0.70)} & \textbf{16.4 (0.22)}  \\
    \bottomrule
    \end{tabular}
\end{table}

\paragraph{rMD17.} To further evaluate data efficiency in the ultra-low data regime, we benchmark on rMD17~\cite{christensen2020role} following~\cite{batatia2022mace}, training each model on only 50 molecular conformations. Table \ref{Tab:MAE of energy and force on the rMD17 dataset} compares our pruned MACE-OFF model (retraining using 50\% data) against ACE~\cite{kovacs2021linear}, NequIP~\cite{batzner20223}, PACE~\cite{xu2024equivariant}, and a MACE model trained from scratch. The pruned model consistently outperforms from-scratch baselines, indicating that structural pruning retains robust feature representations that generalize to extremely limited fine-tuning data.

\begin{table}[ht]
    \caption{\textbf{MAE of energy (E, meV) and force (F, meV/\AA) on the rMD17 dataset.} Each model is trained using only 50 molecules. The best results are in bold.}
    \label{Tab:MAE of energy and force on the rMD17 dataset}
    \centering
    \small
    \begin{tabular}{ll|cccc|c}
    \toprule
    &     & ACE          & NequIP   &PACE    & MACE & MACE \\
    \midrule
    \multicolumn{2}{c|}{Method} & \multicolumn{4}{c|}{From scratch} & Pruning (50\%) \\
    \midrule
    \multirow{2}{*}{Aspirin}
    & E   & 26.2         & 19.5         & 15.7         & 17.0         & \textbf{10.8}  \\
    & F   & 63.8         & 52.0         & 37.4         & 43.9         & \textbf{26.4}  \\
    \midrule
    \multirow{2}{*}{Azobenzene}
    & E   & 9.0          & 6.0          & 6.7          & \textbf{5.4} & 5.5            \\
    & F   & 28.8         & 20.0         & 17.5         & 17.7         & \textbf{16.6}  \\
    \midrule
    \multirow{2}{*}{Benzene}
    & E   & \textbf{0.2} & 0.6          & 0.6          & 0.7          & 0.6            \\
    & F   & 2.7          & 2.9          & 3.3          & \textbf{2.7} & 2.9            \\
    \midrule
    \multirow{2}{*}{Ethanol}
    & E   & 8.6          & 8.7          & 6.3          & 6.7          & \textbf{3.0}   \\
    & F   & 43.0         & 40.2         & 25.4         & 32.6         & \textbf{15.7}  \\
    \midrule
    \multirow{2}{*}{Malonaldehyde}
    & E   & 12.8         & 12.7         & 11.5         & 10.0         & \textbf{6.7}   \\
    & F   & 63.5         & 52.5         & 57.3         & 43.3         & \textbf{25.2}  \\
    \midrule
    \multirow{2}{*}{Naphthalene}
    & E   & 3.8          & \textbf{2.1} & \textbf{2.1} & \textbf{2.1} & \textbf{2.1}   \\
    & F   & 19.7         & 10.0         & 9.7          & 9.2          & \textbf{8.9}   \\
    \midrule
    \multirow{2}{*}{Paracetamol}
    & E   & 13.6         & 14.3         & 10.1         & 9.7          & \textbf{6.7}   \\
    & F   & 45.7         & 39.7         & 29.3         & 31.5         & \textbf{22.1}  \\
    \midrule
    \multirow{2}{*}{Salicylic acid}
    & E   & 8.9          & 8.0          & 7.0          & 6.5          & \textbf{4.5}   \\
    & F   & 41.7         & 35.0         & 29.2         & 28.4         & \textbf{19.5}  \\
    \midrule
    \multirow{2}{*}{Toluene}
    & E   & 5.3          & 3.3          & 2.7          & 3.1          & \textbf{2.1}   \\
    & F   & 27.1         & 15.1         & 12.0         & 12.1         & \textbf{9.6}   \\
    \midrule
    \multirow{2}{*}{Uracil}
    & E   & 6.5          & 7.3          & 5.9          & 4.4          & \textbf{3.1}   \\
    & F   & 36.2         & 40.1         & 26.8         & 25.9         & \textbf{16.0}  \\
    \bottomrule
    \end{tabular}
\end{table}

\subsection{Other Pre-training Datasets}

To verify the transferability of our method to different pre-training data, we conduct an additional experiment using the MACE-OMAT model. The base model was pre-trained on the large-scale OMat24~\cite{barroso2024open} dataset (containing over 100 million structures). We apply our pruning method to this pre-trained model, reducing its equivariant features from $L=1$ to $L=0$, and subsequently retrain it on the official OMat-1M subset.

Table~\ref{Tab:Data efficiency of pruning methods on MACE-OMAT} compares the pruned model against a baseline trained from scratch on OMat-1M with the same architecture. The results show that the pruned model achieves lower errors than the from-scratch baseline, with a 13.0\% reduction in Energy MAE and a 7.1\% reduction in Force MAE. This observation suggests that our pruning approach effectively preserves valuable structural priors. These preserved features serve as a superior initialization compared to random weights, validating the potential of our method in other datasets.

\begin{table}[ht]
    \caption{\textbf{Data efficiency of pruning methods on MACE-OMAT.} The table compares the performance of pruned models against baselines trained from scratch. Metrics include MAE for Energy (E, meV/atom), Force (F, meV/\AA) on OMat-1M. Values in parentheses show the relative change in MAE compared to from-scratch trained models of the same size. Bold values indicate that the pruned model outperforms the from-scratch trained baseline.}
    \label{Tab:Data efficiency of pruning methods on MACE-OMAT}
    \centering
    \small
    \resizebox{\textwidth}{!}{
    \begin{tabular}{l|cccc|lll}
        \toprule 
        \textbf{Method}                 & \textbf{Base Model}  & \textbf{Pruned Model} & \textbf{Data} & \textbf{Params} & \textbf{MAE E}           & \textbf{MAE F}            \\
        \midrule
        From Scratch                    & Layer2 L0 C128       & /                     & OMat-1M       &  8,222,244      & 26.9                     & 106.6                     \\
        \midrule
        L Pruning                       & Layer2 L1 C128       & Layer2 L0 C128        & OMat-1M       &  8,222,244      & \textbf{23.4 (-13.0\%)}  & \textbf{99.0 (-7.1\%)}  \\
        \bottomrule
    \end{tabular}
    }
\end{table}

\section{Applicability of the Proposed Method} \label{AppSec:Applicability of the Proposed Method}

\subsection{SevenNet Architecture}

To verify the transferability of our method, we conduct experiments on the SevenNet~\cite{park_scalable_2024} foundation model, which is based on the NequIP~\cite{batzner20223} architecture. We prune the SevenNet-l3i5 model to a target architecture SevenNet-0. The following describes our pruning strategy for the non-linear layers involved in the NequIP architecture and the results.

\paragraph{Non-linear Activation.} NequIP employs a specific equivariant non-linearity strategy to process geometric tensors. While scalar features ($l=0$) are processed via standard activation functions, higher-order tensors ($l > 0$) rely on a Gated Non-Linearity to preserve equivariance. This mechanism involves generating auxiliary scalar values that act as multiplicative gates for the higher-order features. Consequently, this introduces a strict structural dependency: a higher-order feature block cannot exist without its corresponding scalar gate.

We address this by enforcing a coupling constraint during the mask generation phase. The pruning decision for the gate scalars is not determined by their individual importance scores but is derived deterministically from the masks of the higher-order tensors. Specifically, if a feature block $(k, l)$ with $l > 0$ is retained, we strictly enforce the retention of its associated scalar gate channel in the preceding layer.

\paragraph{Results.} We evaluate our approach by pruning the pre-trained SevenNet-l3i5 model (configured with 5 layers and a maximum order $L_{\text{max}}=3$) down to a target architecture of $L_{\text{max}}=2$. Table~\ref{Tab:Data efficiency of pruning methods on SevenNet} compares our pruning method against a baseline model with the identical architecture ($L_{\text{max}}=2$) trained from scratch.

The results demonstrate significant data efficiency. The model trained from scratch requires 100\% of the dataset to achieve a Force MAE of 40 and an sAlex MAE E of 59.1. In contrast, our Pruning method achieves a better Force MAE of 39 and a lower sAlex MAE E of 58.9 while utilizing only 25\% of the training data. With the full dataset, the pruned model further improves to a Force MAE of 35 and an sAlex MAE E of 53.6, outperforming the from-scratch baseline across all metrics. This indicates that our method effectively transfers structural knowledge from the higher-order foundation model to the compressed model, reducing the data required for convergence compared to standard training.

\begin{table}[ht]
    \caption{\textbf{Data efficiency of pruning methods on SevenNet.} The table compares the performance of pruned models against baselines trained from scratch. Metrics include MAE for Energy (E, meV/atom), Force (F, meV/\AA), and Stress (S, meV/$\text{\AA}^3$) on MPtrj, and Energy (E, meV/atom) on the sAlex subset. Data \% indicates the fraction of training data used.}
    \label{Tab:Data efficiency of pruning methods on SevenNet}
    \centering
    \small
    \resizebox{\textwidth}{!}{
    \begin{tabular}{l|cccc|lll|l}
        \toprule 
        \textbf{Method}                 & \textbf{Base Model}             & \textbf{Pruned Model}            & \textbf{Params}           & \textbf{Data \%}     & \textbf{MAE E}          & \textbf{MAE F}          & \textbf{MAE S}  & \textbf{sAlex MAE E} \\
        \midrule
        \multirow{2}{*}{From Scratch}   & Layer5 L3 C128                  & \multirow{2}{*}{/}               & 1,171,144                 & \multirow{2}{*}{100} & /                       & /                       & /               & 51.2                 \\
                                        & Layer5 L2 C128                  &                                  & 842,440                   &                      & 11                      & 40                      & 1.7             & 59.1                 \\
        \midrule
        \multirow{2}{*}{L Pruning}      & \multirow{2}{*}{Layer5 L3 C128} & \multirow{2}{*}{Layer5 L2 C128}  & \multirow{2}{*}{842,440}  & \textbf{25}          & 11                      & \textbf{39}             & 2.1             & \textbf{58.9}        \\
                                        &                                 &                                  &                           & 100                  & \textbf{9}              & \textbf{35}             & \textbf{1.7}    & \textbf{53.6}        \\
        \bottomrule
    \end{tabular}
    }
\end{table}

\subsection{eSCN Architecture}

\begin{table}[ht]
    \caption{\textbf{Data efficiency of pruning methods on eSCN.} All models use the target architecture eSCN-L4-M2-Lay12. Metrics are Force MAE (meV/\AA, lower is better) and EFwT (\%, higher is better) on the OC20 validation set (ID split). Bold values indicate that the pruned model outperforms the from-scratch baseline.}
    \label{Tab:Data efficiency of pruning methods on eSCN}
    \centering
    \small
    \begin{tabular}{l|c|cc}
        \toprule
        \textbf{Method} & \textbf{Data} & \textbf{Force MAE (meV/\AA) $\downarrow$} & \textbf{EFwT (\%) $\uparrow$} \\
        \midrule
        From Scratch    & 200k          & 33.1                                   & 0.20                          \\
        L Pruning       & 200k          & \textbf{30.3}                          & \textbf{0.24}                 \\
        \midrule
        From Scratch    & 2M            & 19.1                                   & 2.55                          \\
        L Pruning       & 1M            & \textbf{18.4}                          & \textbf{2.86}                 \\
        L Pruning       & 2M            & \textbf{17.5}                          & \textbf{3.36}                 \\
        \bottomrule
    \end{tabular}
\end{table}

To further validate the generalizability of our method beyond MACE and NequIP-based architectures, we conduct experiments on the eSCN (Equivariant Spherical Channel Network)~\cite{passaro2023reducing} foundation model. Like MACE and SevenNet, eSCN constructs geometric tensors via Clebsch-Gordan tensor products and operates on coupled $(2l+1)$-dimensional irreducible representations. This shared mathematical basis means our $(k,l)$-block pruning principle applies to eSCN without any substantive modification.

\paragraph{Setup.} We prune a pre-trained eSCN-L6-M2-Lay12 ($L_{\max}=6$) model to the target architecture eSCN-L4-M2-Lay12 ($L_{\max}=4$). We evaluate data efficiency by training both a randomly initialized model and the pruned model on the OC20 200k and 2M subsets for the same number of epochs. All evaluated models share the same target architecture (eSCN-L4-M2-Lay12).

\paragraph{Results.} Table~\ref{Tab:Data efficiency of pruning methods on eSCN} reports Force MAE and the Energy-Force within Threshold (EFwT) metric on the OC20 validation set (ID split). The pruned model consistently outperforms the from-scratch baseline at every data scale: on 200k samples, it reduces Force MAE from 0.0331 to 0.0303~eV/\AA\ (+20\% relative EFwT), and on 2M samples from 0.0191 to 0.0175~eV/\AA. Notably, the pruned model trained on only 1M samples (50\% of 2M) achieves a Force MAE of 0.0184~eV/\AA, which is comparable to the from-scratch model trained on the full 2M dataset, demonstrating a 2$\times$ reduction in training data requirements.

\section{Licenses for Datasets and Models} \label{AppSec:Licenses}

Table~\ref{Tab:Licenses} provides license information for all datasets and pre-trained models used in this work. All assets are used in accordance with their respective licenses.

\begin{table}[h]
    \caption{\textbf{Licenses for datasets and models used in this work.}}
    \label{Tab:Licenses}
    \centering
    \small
    \begin{tabular}{lll}
        \toprule
        \textbf{Asset} & \textbf{License} & \textbf{Reference} \\
        \midrule
        \multicolumn{3}{l}{\textit{Pre-training Datasets}} \\
        MPtrj & MIT & \cite{deng2023chgnet} \\
        SPICE & CC0 & \cite{eastman2023spice} \\
        OC20 & CC BY 4.0 & \cite{chanussot2021open} \\
        \midrule
        \multicolumn{3}{l}{\textit{Downstream Datasets (Inorganic)}} \\
        SSE, H2O, AgAu, AlMgCu, Cu, Ti, V, W & LGPL-3.0 & \cite{huang2021deep, zhang2021phase, wang2021generalizable, jiang2021accurate, zhang2020dp, wen2021specialising, wang2022classical, wang2022tungsten} \\
        sAlex & CC BY 4.0 & \cite{schmidt2024improving, barroso2024open} \\
        Matbench Discovery & CC BY 4.0 & \cite{riebesell2023matbench} \\
        \midrule
        \multicolumn{3}{l}{\textit{Downstream Datasets (Organic)}} \\
        3BPA, AcAc & CC BY 4.0 & \cite{kovacs2021linear, batatia2022design} \\
        rMD17 & CC0 & \cite{christensen2020role} \\
        \midrule
        \multicolumn{3}{l}{\textit{Pre-trained Models}} \\
        MACE-MP-0 & MIT & \cite{batatia2023foundation} \\
        MACE-OFF & ASL License & \cite{kovacs2025mace} \\
        SevenNet & GPLv3 & \cite{park_scalable_2024} \\
        eSCN & CC BY 4.0 & \cite{passaro2023reducing} \\
        \bottomrule
    \end{tabular}
\end{table}

\section{Broader Impacts} \label{AppSec:Broader Impacts}

This work reduces the computational cost of atomistic foundation models, lowering barriers to computational materials discovery and drug design, which benefits researchers with limited resources and reduces energy consumption. There is no direct path to negative societal impacts: atomistic simulation tools are domain-specific scientific instruments not applicable to surveillance, disinformation, or fairness-sensitive decision making, and our compression method does not change the underlying capabilities or introduce new misuse risks beyond those already inherent to molecular simulation.


\end{document}